\newcommand{\distas}[1]{\mathbin{\overset{#1}{\kern\z@\sim}}}%
\newsavebox{\mybox}\newsavebox{\mysim}
\newcommand{\distras}[1]{%
  \savebox{\mybox}{\hbox{\kern3pt$\scriptstyle#1$\kern3pt}}%
  \savebox{\mysim}{\hbox{$\sim$}}%
  \mathbin{\overset{#1}{\kern\z@\resizebox{\wd\mybox}{\ht\mysim}{$\sim$}}}%
}
\newtheorem{defn}{\noindent Definition}
\newtheorem{lem}{\noindent Lemma}
\newtheorem{thm}{\noindent Theorem}
\newtheorem{rem}{\noindent Remark}
\newtheorem{cor}{\noindent Corollary}
\newtheorem{example}{\noindent Example}
\global\long\def\lowrank#1#2#3{\mathcal{{M}}_{#1\times#2}^{(#3)}}
\global\long\def\affine{\mathcal{{A}}}
\global\long\def\matrixspace#1#2{\mathbb{{R}}_{#1\times#2}}
\global\long\def\orthomatrixspace#1#2{\mathcal{O}_{#1 \times#2}}
\global\long\def\identity#1{I_{#1}}
\global\long\def\matrixrow#1#2{#1_{#2\bullet}}
\global\long\def\matrixcolumn#1#2{#1_{\bullet#2}}
\global\long\def\svdtop#1#2{#1_{(#2)}}
\global\long\def\br{B^{(R)}}
\global\long\def\bc{B^{(C)}}
\global\long\def\brzero{B^{(R,0)}}
\global\long\def\bczero{B^{(C,0)}}
\global\long\def\ar{A^{(R)}}
\global\long\def\ac{A^{(C)}}
\global\long\def\zr{Z^{(R)}}
\global\long\def\zc{Z^{(C)}}
\global\long\def\pr{p^{(R)}}
\global\long\def\pc{p^{(C)}}
\global\long\def\kr{k^{(R)}}
\global\long\def\kc{k^{(C)}}
\global\long\def\aur{A_{\hat{U}}^{(R)}}
\global\long\def\avc{A_{V^{T}}^{(C)}}
\global\long\def\crow{c^{(R)}}
\global\long\def\ccol{c^{(C)}}
\global\long\def\psuedoinv#1{{#1}^{\dagger}}
\global\long\def\loss{\mathcal{\mathcal{{F}}}}
\global\long\def\llossll{\mathcal{\mathcal{{L}}}}
\global\long\def\alg{SVLS }  
\global\long\def\algp{SVLS_P }  
\global\long\def\algo{OptSpace }  
\global\long\def\iid{\overset{i.i.d.}{\sim}}
\global\long\def\snr{NR}
\global\long\def\measurements{d}
\global\long\def\RRMSE{RRMSE }
\newcommand{\ber}{\begin{eqnarray}}
\newcommand{\eer}{\end{eqnarray}}
\newtheorem{claim}{\noindent Claim}
\newcommand{\be}{\begin{equation}}
\newcommand{\ee}{\end{equation}}
\newcommand{\bal}{\begin{align}}
\newcommand{\eal}{\end{align}}
\newcommand{\balnonum}{\begin{align*}}
\newcommand{\ealnonum}{\end{align*}}
\newcommand{\nin}{\noindent}
\begin{document}

\title{Low-Rank Matrix Recovery from Row-and-Column Affine Measurements}

\author{Avishai Wagner \hspace{3cm} Or Zuk  \\
avishai.wagner@mail.huji.ac.il \hspace{0.4cm} or.zuk@mail.huji.ac.il}
\affil{Dept. of Statistics, The Hebrew University of Jerusalem \\
            Mt. Scopus, Jerusalem, 91905, Israel}

\vskip 0.3in
\date{}
\maketitle

\begin{abstract}
We propose and study a row-and-column affine measurement scheme for
low-rank matrix recovery. Each measurement is a linear combination
of elements in one row or one column of a matrix $X$. This setting
arises naturally in applications from different domains. However,
current algorithms developed for standard matrix recovery
problems do not perform well in our case, hence the need for developing
new algorithms and theory for our problem. We propose a simple algorithm
for the problem based on Singular Value Decomposition ($SVD$) and least-squares
($LS$), which we term \alg. We prove that (a simplified version of)
our algorithm can recover $X$ exactly with the minimum possible number
of measurements in the noiseless case. In the general noisy case,
we prove performance guarantees on the reconstruction accuracy under
the Frobenius norm. In simulations, our row-and-column design and
\alg algorithm show improved speed, and comparable and in some cases better accuracy compared
to standard measurements designs and algorithms.
Our theoretical and experimental results suggest that the proposed
row-and-column affine measurements scheme, together with our recovery
algorithm, may provide a powerful framework for affine matrix reconstruction.

\end{abstract}

\nin \textbf{Keywords:} low-rank matrix recovery, row and column measurements, matrix completion, singular value decomposition

\section{Introduction}

In the low-rank affine matrix recovery problem, an unknown matrix
$X\in\mathbb{R}_{n_{1}\times n_{2}}$ with $rank(X)=r$ is measured
indirectly via an affine transformation $\mathcal{{A}}:\mathbb{R}_{n_{1}\times n_{2}}\rightarrow\mathbb{R}^{\measurements}$
and possibly with additive (typically Gaussian) noise $z\in\mathbb{R}^{d}$.
Our goal is to recover $X$ from the vector of noisy measurements
$\mathbf{b}=\mathcal{{A}}(X)+z$. The problem has found numerous applications
throughout science and engineering, in different fields such as collaborative
filtering \cite{koren2009matrix}, face recognition \cite{basri2003lambertian},
quantum state tomography \cite{gross2010quantum} and computational
biology \cite{chi2013genotype}. The problem has been studied mathematically
quite extensively in the last few years. Most attention thus far has
been given to two particular ensembles of random transformations $\affine$:
(i) the Matrix Completion (MC) setting, in which each element of $\mathcal{{A}}(X)$
is a single entry of the matrix where the subset of the observed measurements is
sampled uniformly at random \cite{candes2009exact,Candes2010,candes2010power,keshavan2009matrix,keshavan2010matrix,recht2011simpler}
(ii) Gaussian-Ensemble (GE) affine-matrix-recovery, in which each
element of $\mathcal{{A}}(X)$ is a weighted sum of all elements of
$X$ with i.i.d. Gaussian weights \cite{candes2011tight,recht2010guaranteed}.
Remarkably, although the recovery problem is in general NP-hard, when
$r\ll min(n_{1},n_{2})$ and under certain conditions on the matrix
$X$ or the measurement operator $\mathcal{{A}}$,
one can recover $X$ from $d\ll n_{1}n_{2}$ measurements with high
probability and using efficient algorithms \cite{candes2009exact,recht2010guaranteed,candes2010power,recht2011simpler}.
However, it is desirable to study the problem with other affine transformations
$\affine$ beyond the two ensembles mentioned above for the following
reasons: (i) In some applications we cannot control the measurements
operator $\mathcal{{A}}$, and different models for the measurements
may be needed to allow a realistic analysis of the problem (ii) When
we can control and design the measurement operator $\affine$,
other measurement operators may outperform the two ensembles mentioned above with respect
to optimizing different resources such as the number of measurements required, computation
time and storage. The main goal of this paper is to present and study
a different set of affine transformations, which we term row-and-column
affine measurements. This setting may arise naturally in many applications,
since it is often natural and possibly cheap to measure a single row
or column of a matrix, or a linear combination of a few such rows
and columns. For example, (i) In collaborative filtering, we may wish
to recover a users-items preference matrix and have access to only
a subset of the users, but can observe their preference scores for
\textit{all} items (ii) When recovering a protein-RNA interactions
matrix in molecular biology, a single experiment may simultaneously
measure the interactions of a specific protein with all RNA molecules
\cite{chu2011genomic}.

In general, we can represent any affine transformation $\affine$ in matrix representation
$\affine(X)=Avec(X)$, where $vec(X)$ is a column vector obtained by stacking all columns of $X$ on top of each other.
In our row and column framework the measurement operator $\affine$
is represented differently using two matrices $\ar,\ac$ which multiply $X$ as a matrix
(rather than multiplying the vector $vec(X)$) from left and right, respectively.
We focus on two ensembles of $\ar,\ac$: (i) Matrix Completion from single Columns and Rows
(RCMC). Here we observe single matrix entries in similar to standard
matrix completion case, however the measured entries are not scattered
randomly along the matrix, but instead we sample a few rows
and a few columns, and measure \textit{all} entries in these rows
and columns. This ensemble is implemented by setting the rows (columns)
of $\ar$ $(\ac)$ as random vectors from the standard basis of $\mathbb{R}^{n_{1}}$  ($\mathbb{R}^{n_{2}}$). (ii)
Gaussian Row-and-Column (GRC) measurements. Here each set of measurements
is a weighted linear combination of the matrix's rows (or columns)
with the weights taken as i.i.d. Gaussians. This ensemble is implemented
by setting the entries of $\ar,\ac$ as i.i.d. Gaussian random variables.

The measurement operators $\affine$ in our RCMC and GRC models do not satisfy
standard requirements which hold for GE and MC. It is thus not surprising that algorithms
such as nuclear norm minimization \cite{recht2010guaranteed,candes2009exact}, which
succeed for the GE and MC models, fail in our case,
and different algorithms and theory are required.
However, the specific algebraic structure provided by the row-and-column
measurements, allows us to derive efficient and simple algorithms, and to
analyze their performance.
In addition, we provide extensive simulation results, which demonstrate the improved accuracy and speed
of our approach over existing measurement designs and algorithms. All of our algorithms and simulations are implemented
in a Matlab software package available at \\
\url{https://github.com/avishaiwa/SVLS}.

\subsection{Prior Work}

Before giving a detailed derivation and analysis of our design and
algorithms, we give an overview of existing designs and their
properties. We concentrate on two properties: (i) storage required
in order to represent the measurement operator, and (ii) measurement sparsity,
defined as the sum over all measurements of the number of matrix entries
participating in each measurement, that is $S(\affine)=||vec(A)||_0$.  The latter property may be related
to measurement costs, as well as to computational time.

In the Gaussian Ensemble model, the entries of the matrix $A$ in the matrix representation
$\affine(X)=Avec(X)$ are i.i.d. Gaussian random variables, $A_{ij} \sim N(0,1)$.
For this ensemble, one can recover uniquely a low
rank matrix $X$ with $O(r (n_{1}+n_{2}))$ noiseless measurements using
nuclear norm minimization \cite{recht2010guaranteed,candes2011tight}
or other methods such as Singular Value Projection (SVP) \cite{jain2010guaranteed},
which is optimal up to constants. Recovery in this model is robust
to noise, with only a small increase in number of measurements. The main disadvantage
of this model is that the design requires $O(dn_{1}n_{2})$ storage space
for ${A}$, which could be problematic for large matrices. Another
possible disadvantage of this method is that measurements are dense
- each measurement represents a linear combination of all $O(n_{1}n_{2})$
matrix entries, and the overall measurement sparsity of $\affine(X)$
is also $O(dn_{1}n_{2})$, which could be problematic for large
$n_{1},n_{2}$.

In the standard matrix completion problem \cite{candes2009exact}
we can recover $X$ with high probability from single entries chosen uniformly
at random using nuclear norm minimization \cite{cai2010singular,toh2010accelerated,candes2010power,ma2011fixed,recht2011simpler}
or using other methods such as $SVD$ and gradient descent \cite{keshavan2009matrix,keshavan2010matrix}.
This model has the lowest storage requirements ($O(d)$) and measurement
sparsity ($O(d)$). However, recovery guarantees for this model are
weaker: setting $n=\max(n_1,n_2)$, it is shown that $\Theta(nrlog(n))$ measurements are required to recover a rank $r$ matrix of size $n_1 \times n_2$ \cite{candes2010power}.
In addition, unique recovery from this number of measurements requires additional incoherence conditions on the matrix $X$,
and recovery of matrices which fail to satisfy such conditions (e.g. matrices with a few spikes) may require a
much larger number of measurements.

Recently a new design of rank one projections was proposed
\cite{cai2015rop}, where each measurement is of the form $\alpha^{T}X\beta$
and such that $\alpha\in\mathbb{R}^{n_{1}} , \beta\in\mathbb{R}^{n_{2}}$
have i.i.d standard Gaussian entries. It was proven that nuclear norm
minimization can recover $X$ with high probability in this design from $O(n_1r+n_2r)$ measurements. This is the first
model deviating from MC and GE we are aware of. This model is different
from our row-and-column model, as each measurement is obtained by
multiplying $X$ from both sides, whereas in our model each measurement
is obtained by multiplying $X$ from either left or right. Moreover,
in our model the measurements are not chosen independently from each
other but come in groups of size $n_{1}$ or $n_{2}$ (corresponding
to rows or columns $\ar,\ac$). An advantage of rank one projection
is that it leads to a significance reduction in
measurement storage needed for $\affine$ with overall $O(dn_{1}+dn_{2})$
storage space. However, each measurement is still dense and involve
all matrix elements, hence measurement sparsity is $O(dn_{1}n_{2})$.
In contrast, our $GRC$ model requires only $O(d)$ storage for $\affine$,
and every measurement depends only on $O(n)$ elements,
leading to a reduced overall time for all measurements $O(dn_{1}+dn_{2})$.
For RCMC, we need only $O(\frac{dlog(n)}{n})$ storage for $\affine$,
and measurement sparsity is $O(d)$.

\section{Preliminaries and Notations\label{sec:Preliminaries-and-Notations}}

We denote by $\matrixspace{n_{1}}{n_{2}}$ the space of matrices of
size $n_{1}\times n_{2}$, by $\orthomatrixspace{n_{1}}{n_{2}}$ the
space of matrices of size $n_{1}\times n_{2}$ with orthonormal columns,
and by $\lowrank{n_{1}}{n_{2}}r$ the space of matrices of size $n_{1}\times n_{2}$
and rank $\leqslant r$. We denote $n=\max(n_1,n_2)$.

We denote by $||\cdot||_{F}$ the matrix Frobenius norm, by $||\cdot||_{*}$
the nuclear norm, and by $||\cdot||_{2}$ the spectral norm. For a
vector, $||\cdot||$ denotes the standard $l_{2}$ norm.

For $X\in\matrixspace{n_{1}}{n_{2}}$we denote by $span(X)$ the subspace
of $\mathbb{R}^{n_{1}}$ spanned by the columns of $X$ and define
$P_{X}$ to be the orthogonal projection into $span(X)$.

For a matrix $X$ we denote by $\matrixrow Xi$ the $i$-th row, by
$\matrixcolumn Xj$ the $j$-th column and by $X_{ij}$ the $(i,j)$
element. For two sets of indices $I,J$, we denote by $X_{IJ}$ the
sub-matrix obtained by taking the rows with indices in $I$ and columns
with indices in $J$ of $X$. We denote by $[k]$ the set of indices
$1,..,k$. We denote by $vec(X)$ the (column) vector obtained by
stacking all the columns of $X$ on top of each other.

We use the notation $X\iid G$ to denote a random matrix $X$ with
i.i.d. entries $X_{ij}\sim G$.

For a rank-$r$ matrix $X\in\lowrank{n_{1}}{n_{2}}r$ let $X=U\Sigma V^{T}$
be the Singular Value Decomposition ($SVD$) where 
$U\in\orthomatrixspace{n_{1}}r,\thinspace V\in\orthomatrixspace r{n_{2}}$
and $\Sigma=diag(\sigma_{1}(X),...,\sigma_{r}(X))$ with 
$\sigma_{1}(X)\geq\sigma_{2}(X)\geq..\geq\sigma_{r}(X)>0$
the (non-zero) singular values of $X$ (we omit the zero singular
values and their corresponding vectors from the decomposition). For
a general matrix $X\in\matrixspace{n_{1}}{n_{2}}$ we denote by $\svdtop Xr$
the top-$r$ singular value decomposition of $X$, $\svdtop Xr=U_{\bullet[r]}\Sigma_{[r][r]}V_{\bullet[r]}^{T}$.

Our model assumes two affine transformations applied to $X$, representing
rows and columns, $\bczero=X\ac$ and $\brzero=\ar X,$ achieved by
multiplications with two matrices $\ar\in\matrixspace{k^{(R)}}{n_{1}}$
and $A^{(C)}\in\matrixspace{n_{2}}{k^{(C)}}$, respectively. We obtain noisy observations
of these transformations, $\br,\bc$ obtained by applying additive
noise:
\begin{equation}
\ar X+Z^{(R)}=B^{(R)}\,;\quad XA^{(C)}+Z^{(C)}=B^{(C)}\label{eq:noisy_case}
\end{equation}
 where the total number of measurements is $\measurements=k^{(R)}n_{1}+n_{2}k^{(C)}$,
and $Z^{(R)}\in\mathbb{R}_{n_{1}\times k^{(R)}}$,$Z^{(C)}\in\mathbb{R}_{k^{(C)}\times n_{2}}$
are two zero-mean noise matrices. Our goal is to recover $X$ from
the observed measurements $B^{(C)}$ and $B^{(R)}$. To achieve this
goal, we define the squared loss function
\begin{equation}
\mathcal{\loss}(X)=||\ar X-B^{(R)}||_{F}^{2}+||XA^{(C)}-B^{(C)}||_{F}^{2}\label{eq:losses_function}
\end{equation}
 and solve the least squares problem:
\begin{equation}
Minimize\:\loss(X)\: s.t.\: X\in\lowrank{n_{1}}{n_{2}}r.\label{eq:problem1}
\end{equation}
 If $Z^{(R)},Z^{(C)}\iid N(0,\tau^{2})$ , minimizing the loss function
in eq. (\ref{eq:losses_function}) is equivalent to maximizing the
log-likelihood of the data, giving a statistical motivation for the
above score. Problem (\ref{eq:problem1}) is non-convex due to the
non-convex rank constraint $rank(X)\leq r$.

Our problem is a specialization of the general affine matrix recovery
problem \cite{recht2010guaranteed}, in which a matrix is measured
using a general affine transformation $\affine$ with $\mathbf{b}=\affine(X)+z$.
We consider next and throughout the paper two specific \textit{random ensembles} of measurement matrices: 
\begin{enumerate}
\item \textbf{Row and Column Matrix Completion (RCMC):} In this ensemble
each row of $\ar$ and each column of $\ac$ is a vector of the standard
basis $e_{j}$ for some $j$ - thus each measurement $\br_{ij}$ or
$\bc_{ij}$is obtained from a single entry of $X$. We define a row-inclusion
probability $\pr$ and column inclusion probability $\pc$ such that
each row (column) of $X$ will be measured with probability
$\pr$ ($\pc$). More precisely, we define $r_{1},..,r_{n_{1}}$ i.i.d.
Bernoulli variables, $P(r_{i}=1)=\pr$, and include $e_{i}$ as a
row in $\ar$ if and only if $r_{i}=1$. Similarly, we define $c_{1}...c_{n_{2}}$
i.i.d. Bernoulli variables, $P(c_{i}=1)=\pc$, and include $e_{i}$
as a column in $\ac$ if and only if $c_{i}=1$. The expected number
of observed rows (columns) is $\kr=n_{1}\pr$ ($\kc=n_{2}\pc$). The
model is very similar to the possibly more natural model of picking
$\kr$ distinct rows and $\kc$ distinct columns at random for fixed
$\kr,\kc$, but allows for easier analysis.

\item \textbf{Gaussian Rows and Columns (GRC):} In this ensemble $\ar,\ac\iid N(0,1)$.
Each observation $\br_{ij}$ or $\bc_{ij}$ is obtained by a weighted
sum of a single row or column of $X$, with i.i.d. Gaussian weights.
\end{enumerate}

\subsection{Comparison to Standard Designs }

Our proposed rows-and-columns design differs from standard
designs appearing in the literature. It
is instructive to compare our GRC ensemble to the Gaussian Ensemble
(GE) \cite{candes2011tight}, with the matrix representation
$\affine(X)=Avec(X)$ where $A\in\matrixspace{\measurements}{n_{1}n_{2}}$ and $A\iid N(0,1)$. For the latter, the following $r$-Restricted
Isometry Property (RIP) can be used:
\begin{defn}
(r-RIP) Let $\affine:\matrixspace{n_{1}}{n_{2}}\rightarrow\mathbb{R}^{\measurements}$
be a linear map. For every integer $r$ with $1\le r\le min(n_{1},n_{2})$,
define the $r$-Restricted Isometry Constant to be the smallest number
$\epsilon_{r}$ such that

\begin{equation}
(1-\epsilon_{r})||X||_{F}\leq ||\affine(X)|| \leq(1+\epsilon_{r})||X||_{F}
\end{equation}
holds for all matrices $X$ of rank at most $r$.
\end{defn}

The GE model satisfies the $r$-RIP condition for $d = O(rn)$ with
high probability \cite{recht2010guaranteed}. Based on this property
it is known that nuclear norm minimization \cite{recht2010guaranteed,candes2011tight}
and other algorithms such as SVP \cite{jain2010guaranteed}
can recover $X$ with high probability. Unlike GE, in our GRC model
$\affine(X)$ doesn't satisfy the $r$-RIP, and nuclear
norm minimization fails. Instead, $\ar,\ac$ preserve matrix Frobenius
norm in high probability - a weaker property which holds for \textit{any} low-rank matrix.
(see Lemma \ref{lem:JL_Lemma} in the Appendix).

We next compare RCMC to the standard Matrix Completion model
\cite{candes2009exact}, in which single entries are chosen at random
to be observed. Unlike GE, for MC incoherence conditions on $X$ are
required in order to guarantee unique recovery of  $X$ \cite{candes2009exact} :
\begin{defn}
\label{def:-(Incoherence)}(Incoherence). Let $U$ be a subspace of
$\mathbb{R}^{n}$ of dimension $r$, and $P_{U}$ be the orthogonal
projection on $U$. Then the coherence of $U$ (with respect to the
standard basis $\{e_{i}\}$) is defined as
\begin{equation}
\text{\ensuremath{\mu}}(U)\text{\ensuremath{\equiv}}\frac{n}{r}max_{i}||P_{U}(e_{i})||^{2}.\label{eq:incoherence}
\end{equation}

\end{defn}
We say that a matrix $X\in\matrixspace{n_{1}}{n_{2}}$ is $\mu$-incoherent
if for the $SVD$ $X=U\Sigma V^{T}$ we have $max(\mu(U),\mu(V))\leq\mbox{\ensuremath{\mu}}$.

When $X$ is $\mu$-incoherent, and when known entries are sampled uniformly
at random from $X$, several algorithms \cite{keshavan2009matrix,cai2010singular,jain2010guaranteed}
succeed to recover $X$ with high probability. In particular, nuclear
norm minimization has gained popularity as a solver for the standard
MC problem because it provides recovery guarantees and a convenient representation
as a convex optimization problem with availability of many iterative
solvers for the problem. However, nuclear norm minimization fails
for the RCMC design, even when the matrix $X$ is incoherent, as shown
by the next example:

\begin{example}
Take $X\in\matrixspace nn$ for $\frac{n}{3}\in\mathbb{N}$
with $X_{ij}=1,\:\:\forall(i,j)\in[n]\times[n]$. Thus $||X||_{*}=n$.
Take $k^{(R)}=k^{(C)}=\frac{n}{3}$. Set all unknown
entries to $0.5$, giving a matrix $X_{0}$ of rank $2$
with $\sigma_{1}(X_{0})=\frac{(\sqrt{2}+1)n}{3}$, $\sigma_{2}(X_{0})=\frac{(\sqrt{2}-1)n}{3}$.
Therefore $||X_{0}||_{*}=\frac{n\sqrt{2}}{3}<||X||_{*}$ and nuclear
norm minimization fails to recover the correct $X$.
\end{example}

In Section \ref{sec:Our-method} we present our SVLS algorithm, which does not rely on nuclear-norm minimization.
In Section \ref{sec:Performance-Guarantees} we show that SVLS successfully approximates $X$ for the GRC ensemble.

\section{Algorithms for Recovery of $X$ \label{sec:Our-method}}

In this section we give an efficient algorithm which we call \alg
(Singular Value Least Squares). \alg is very easy to implement - for simplicity, we start with
Algorithm \ref{alg:noiseless_case} for the noiseless case and then present Algorithm \ref{alg:basis_noisy_case} (\alg\!) which is applicable
for the general (noisy) case.

\subsection{Noiseless Case}

In the noiseless case we reduce the optimization problem (\ref{eq:problem1})
to solving a system of linear equations \cite{candes2009exact}, and provide Algorithm \ref{alg:noiseless_case}, which often leads to a closed-form
estimator. We then give conditions under which with high probability,
the closed-form solution is unique and is equal to the true matrix $X$.
\begin{algorithm}[h]
Input: $A^{(R)},A^{(C)},B^{(R)},B^{(C)}$ and rank $r$
\begin{enumerate}
\item Compute a basis (of size $r$) to the column space of $B^{(C)}$ using Gaussian elimination,
represented as the columns of a matrix $\hat{U} \in \matrixspace {n_{1}}r $.
\item Solve the linear system $B_{\bullet j}^{(R)}=A^{(R)}\hat{U}Y_{\bullet j}$ for
each $j=1,..,n_2$ and write the solutions as a matrix $Y=Y_{\bullet1}...Y_{\bullet n_{2}}$.
\item Output $\hat{X}=\hat{U}Y$
\end{enumerate}
\protect\caption{\label{alg:noiseless_case}}
\end{algorithm}
 If $rank(A^{(R)}\hat{U})=r$ one can write the resulting estimator
$\hat{X}$ in closed-form as follows:
\begin{equation}
\hat{X}=\hat{U}Y=\hat{U}[\hat{U}^{T}A^{(R)^{T}}A^{(R)}\hat{U}]^{-1}\hat{U}^{T}A^{(R)^{T}}B^{(R)} \label{eq:algorithm1}
\end{equation}

Algorithm \ref{alg:noiseless_case} does not treat the row and column measurements symmetrically.
We can apply the same algorithm, but changing the role of rows and
columns. The resulting closed form solution is then:
\begin{equation}
\hat{X}=B^{(C)}A^{(C)}\hat{V}[\hat{V}^{T}A^{(C)}A^{(C)^{T}}\hat{V}]^{-1}\hat{V}^{T} \label{eq:algorithm1_columns}
\end{equation}
 for an orthogonal matrix $\hat{V}$ representing a basis for the
rows of $X$.
Since the algorithm uses Gaussian elimination steps for solving systems of linear equations,
it is crucial that we have exact noiseless measurements. Next, we
modify the algorithm to work also for noisy measurements.

\subsection{General (Noisy) Case}

In the noisy case we seek a matrix $X$ minimizing the loss $\loss$
in eq. (\ref{eq:losses_function}). The minimization problem is
non-convex and there is no known algorithm with optimality guarantees.
We propose Algorithm \ref{alg:basis_noisy_case} (\alg\!), which empirically returns a matrix
estimator $\hat{X}$ with a low value of the loss $\loss$. In addition, we prove in Section \ref{sec:Performance-Guarantees}
recovery guarantees on the performance of \alg\!.
\begin{algorithm}[H]
Input: $A^{(R)},A^{(C)},B^{(R)},B^{(C)}$ and rank $r$
\begin{enumerate}
\item Compute $\hat{U}$, the $r$ largest left singular vectors of $B^{(C)}$, ($\hat{U}$ is a basis for the columns space of $B^{(C)}_{(r)}$).
\item Find the least-squares solution
\be
\hat{Y}=argmin_{Y}\parallel B^{(R)}-A^{(R)}\hat{U}Y||_F.
\ee
If $rank(A^{(R)}\hat{U})=r$ we can write $\hat{Y}$ in closed form
as before:
\begin{equation}
\hat{Y}=[\hat{U}^{T}A^{(R)^{T}}A^{(R)}\hat{U}]^{-1}\hat{U}^{T}A^{(R)^{T}}B^{(R)}. \label{eq:close_formula_alg2}
\end{equation}
\item Compute the estimate $\hat{X}^{(R)}=\hat{U}\hat{Y}$.
\item Repeat steps 1-3, replacing the roles of columns and rows to get
an estimate $\hat{X}^{(C)}$.
\item Set $\hat{X} = argmin_{\hat{X}^{(R)},\hat{X}^{(C)}} \loss(X)$, for the loss $\loss(X)$ given in eq. (\ref{eq:losses_function}).
\end{enumerate}
\protect\caption{\label{alg:basis_noisy_case} \alg}
\end{algorithm}

\subsubsection{Gradient Descent}

The estimator  $\hat{X}$ returned by \alg may not minimize
the loss function $\loss$ in eq. (\ref{eq:losses_function}). We therefore perform
an additional gradient descent stage starting from $\hat{X}$ to achieve
an estimator with lower loss (while still possibly only a local minimum since the
problem is non-convex). \alg can be thus viewed as a fast method for providing a desirable
starting point for local-search algorithms. The details of the gradient descent are
given in the Appendix, Section \ref{sec:gradient_descent_appendix}.

\subsection{Estimation of Unknown Rank}
\label{sec:estimate_unknown_rank}
In real life problems, one doesn't know the true rank of a matrix
and should estimate it from data. Our rows-and-columns sampling design
is particularly suitable for rank estimation since $rank(\bczero)=rank(\brzero)=rank(X)$
with high probability when enough rows and columns are sampled. In
the noiseless case we can estimate $rank(X)$ by $\hat{r}$=$rank(\bczero)$ or $rank(\brzero)$.

For the noisy case we estimate $rank(X)$ from $B^{(C)},B^{(R)}$. We use the popular elbow method
to estimate $rank(B^{(C)})$ in the following way
\begin{equation}
\hat{r}^{(C)}=argmax_{i\in[k^{(C)}-1]}\left(\frac{\sigma_{i}(\bc)}{\sigma_{i+1}(\bc)}\right) \label{eq:rank_estimation}
\end{equation}
We compute similarly $\hat{r}^{(R)}$ from $B^{(R)}$ and take the
average as our rank estimator, $\hat{r}=round\left(\frac{\hat{r}^{(C)}+\hat{r}^{(C)}}{2}\right)$.
We demonstrate the performance of our rank estimation using simulations in the Appendix, Section \ref{sec:Rank estimation}.

Modern methods for rank estimation from singular values \cite{Gavish2013} can be similarly applied to $B^{(R)},B^{(C)}$
and may yield more accurate rank estimates. After we estimate the rank, we can plug-in $\hat{r}$ as the rank
parameter in the \alg algorithm and recover $X$.  
\subsection{Low Rank Approximation \label{sec:Low_rank_aprox}}
In the low rank matrix approximation problem, the goal is to approximate a (possibly full rank) matrix $X$
by the closest (in Frobenius norm) rank-$r$ matrix $X_{(r)}$.
By the  Eckart-Young Theorem \cite{eckart1936approximation}, this problem has a closed-form solution which is the truncated $SVD$ of $X$.
$SVD$ is a powerful tool in affine matrix recovery and different algorithms such as SVT, \algo, SVP and others apply $SVD$.
In \cite{halko2011finding} the authors try to find a low rank approximation to $X$ using measurements $X{\ac}=\bc$ and ${\ar}X=\br$.
For large $n_1,n_2$ they give a single-pass algorithm which computes $X_{(r)}$ using only $\bc$ and $\br$.
We bring their algorithm in the Appendix, Section \ref{sec:Low Rank matrix}.
The main difference between the above formulation and our problem in eq. (\ref{eq:problem1}) is the rank estimation.
In \cite{halko2011finding} it is assumed that $k^{(R)}=k^{(C)}=k$ and one estimates $X_{(k)}$ instead of a rank-$r$ matrix which can lead to poor performance if $r \ll k$.
We adjusted the algorithm presented in \cite{halko2011finding} to our problem and give a new estimator which is a combination of \alg and \cite{halko2011finding}'s method,
replacing $\hat{X}^{(R)}$ and $\hat{X}^{(C)}$ in steps 3,4 of \alg by:

\begin{equation} \label{Tropp_estimation}
\quad \hat{X}^{(R)}_{P}=\hat{X}^{(R)}\hat{V}\hat{V}^T \:,
\quad \hat{X}^{(C)}_{P}=\hat{U}\hat{U}^T\hat{X}^{(C)}.
\end{equation}

Here $\hat{V}$ is the $r$ largest right singular vectors of $\br$ and $\hat{U}$ is the $r$ largest left singular vectors of $\bc$. We call this new estimator $\algp$.
Simulations show almost identical and in some cases slightly better performance of this modified algorithm compared to \alg (see Appendix, Section \ref{sec:Low Rank matrix}).
This modified estimator is however difficult to analyze rigorously, and therefore we present throughout the paper our results for the \alg estimator.

\section{Performance Guarantees\label{sec:Performance-Guarantees}}

In this section we give guarantees on the accuracy of the estimator $\hat{X}$ returned
by \alg\!. Our guarantees are probabilistic, with respect to randomizing
the design matrices $\ar,\ac$. For the noiseless case we give conditions
which are close to optimal for exact recovery.

\subsection{Noiseless Case}
\label{sec:noiseless_case}

A rank $r$ matrix of size $n_{1}\times n_{2}$ has $r(n_{1}+n_{2}-r)$
degrees of freedom, and can therefore not be uniquely recovered by fewer measurements.
Setting $k^{(R)}=k^{(C)}=r$ gives precisely this minimal number of
measurements. We next show that this number suffices, with probability
$1$, to guarantee accurate recovery of $X$ in the GRC model. In
the RCMC model the number of measurements is increased by a logarithmic factor in $n$
and we need an additional incoherence assumption on $X$ in order to guarantee accurate
recovery with high probability. We first present two Lemmas which will be useful.
Their proofs are given in the Appendix, Section \ref{proof:hatX_is_X}.
\begin{lem} \label{lem:uniqness solution affine} Let $X_{1},X_{2}\in\lowrank{n_{1}}{n_{2}}r$
and $A^{(R)}\in\matrixspace{k^{(R)}}{n_{1}},A^{(C)}\in\matrixspace{n_{2}}{k^{(C)}}$
such that $rank(A^{(R)}X_{1})=rank(X_{1}A^{(C)})=r$. If $A^{(R)}X_{1}=A^{(R)}X_{2}$
and $X_{1}A^{(C)}=X_{2}A^{(C)}$ then $X_{1}=X_{2}$.
\end{lem}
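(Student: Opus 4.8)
The plan is to work with the difference $D=X_{1}-X_{2}$ and show $D=0$. First I would note that the hypotheses force $rank(X_{1})=r$ exactly, since $r=rank(\ar X_{1})\le rank(X_{1})\le r$. The column space of $\ar X_{1}$ equals the image $\ar(span(X_{1}))$, so $rank(\ar X_{1})=r$ means $\ar$ maps the $r$-dimensional subspace $span(X_{1})$ onto an $r$-dimensional space, hence injectively; equivalently $\ker(\ar)\cap span(X_{1})=\{0\}$. Transposing the second hypothesis, $rank({\ac}^{T}X_{1}^{T})=r=rank(X_{1}^{T})$ gives in the same way $\ker({\ac}^{T})\cap span(X_{1}^{T})=\{0\}$, where $span(X_{1}^{T})$ is the row space of $X_{1}$.

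Next I would translate the two equations into $\ar D=0$ and $D\ac=0$. The first says every column of $D$ lies in $\ker(\ar)$, so $span(D)\subseteq\ker(\ar)$ and hence $span(D)\cap span(X_{1})=\{0\}$. The second gives ${\ac}^{T}D^{T}=0$, so $span(D^{T})\subseteq\ker({\ac}^{T})$ and hence $span(D^{T})\cap span(X_{1}^{T})=\{0\}$. Thus $X_{1}$ and $D$ have independent column spaces \emph{and} independent row spaces.

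The key step is then a rank-additivity fact: if matrices $M,N$ of equal size satisfy $span(M)\cap span(N)=\{0\}$ and $span(M^{T})\cap span(N^{T})=\{0\}$, then $rank(M+N)=rank(M)+rank(N)$. I would prove this by factoring $M=P_{M}\tilde{M}$, $N=P_{N}\tilde{N}$ through orthonormal bases $P_{M},P_{N}$ of the two column spaces: independence of the column spaces makes $[P_{M}\ P_{N}]$ of full column rank, so $rank(M+N)=rank\big([\tilde{M}^{T}\ \tilde{N}^{T}]^{T}\big)$, and independence of the row spaces makes this stacked matrix have rank $rank(\tilde{M})+rank(\tilde{N})=rank(M)+rank(N)$. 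Applying it to $X_{2}=X_{1}+(-D)$ with the independence just established gives $rank(X_{2})=rank(X_{1})+rank(D)=r+rank(D)$. Since $X_{2}\in\lowrank{n_{1}}{n_{2}}r$, i.e.\ $rank(X_{2})\le r$, this forces $rank(D)=0$, so $X_{1}=X_{2}$.

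I do not expect a deep obstacle; the one point to get right is the rank-additivity lemma, which genuinely needs \emph{both} independence of column spaces and of row spaces — either alone is insufficient (rank-one matrices with a common row space already fail additivity) — and this is exactly why the statement needs \emph{both} a row condition $rank(\ar X_{1})=r$ and a column condition $rank(X_{1}\ac)=r$. A natural alternative, showing $span(X_{2})=span(X_{1})$ and $span(X_{2}^{T})=span(X_{1}^{T})$ and then cancelling the left-invertible $\ar U$ and right-invertible $V^{T}$ in $\ar UMV^{T}=\ar U\Sigma V^{T}$ after writing $X_{2}=UMV^{T}$, ends up requiring essentially the same counting, so I would keep the difference-matrix argument.
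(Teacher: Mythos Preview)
Your proof is correct. The rank-additivity step is sound: factoring through orthonormal bases of the two column spaces reduces $rank(M+N)$ to the rank of a stacked matrix, and independence of the row spaces then gives the additive formula. With $span(D)\cap span(X_{1})=\{0\}$ and $span(D^{T})\cap span(X_{1}^{T})=\{0\}$ established from the two kernel inclusions, the conclusion $rank(X_{2})=r+rank(D)\leq r$ forces $D=0$.

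The paper takes a different, somewhat more direct route --- in fact the ``natural alternative'' you sketched at the end. From $X_{1}\ac=X_{2}\ac$ having rank $r$ and $span(X_{i}\ac)\subseteq span(X_{i})$ with $dim(span(X_{i}))\leq r$, it concludes $span(X_{1})=span(X_{1}\ac)=span(X_{2}\ac)=span(X_{2})$; then, fixing a common orthonormal basis $U\in\orthomatrixspace{n_{1}}r$ and writing $X_{i}=UY_{i}$, the equation $\ar UY_{1}=\ar UY_{2}$ is cancelled on the left because $rank(\ar U)=r$ makes $U^{T}A^{(R)^{T}}\ar U$ invertible. This avoids proving any auxiliary rank-additivity lemma and is a couple of lines shorter. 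Your approach, on the other hand, is more symmetric in its use of the row and column hypotheses and makes very explicit \emph{why} both are indispensable (each furnishes one of the two independence conditions the additivity lemma needs). Either argument is perfectly adequate here; contrary to your closing remark, the span-equality route does not really require the same counting --- it sidesteps the additivity lemma entirely by working with a common $U$ from the outset.
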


\begin{lem}
\label{lem:hatX_is_X} Let $X\in\lowrank{n_{1}}{n_{2}}r$ and $A^{(R)}\in\matrixspace{k^{(R)}}{n_{1}},A^{(C)}\in\matrixspace{n_{2}}{k^{(C)}}$
such that $rank(\ar X)=rank(X\ac)=r$. For Algorithm \ref{alg:noiseless_case} with inputs $A^{(R)},A^{(C)},B^{(R,0)},$ $B^{(C,0)}$
and $r$ the output $\hat{X}$ satisfies
\begin{equation}
A^{(R)}X=A^{(R)}\hat{X},\, XA^{(C)}=\hat{X}A^{(C)}
\end{equation}
\end{lem}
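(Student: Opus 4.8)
The plan is to show that with noiseless inputs Algorithm~\ref{alg:noiseless_case} reconstructs $X$ exactly, which in particular gives both displayed identities. In the noiseless case $B^{(R,0)}=A^{(R)}X$ and $B^{(C,0)}=XA^{(C)}$. First I would observe that the first identity is essentially free: Step~2 of the algorithm produces $Y$ with $A^{(R)}\hat U Y_{\bullet j}=B^{(R,0)}_{\bullet j}=(A^{(R)}X)_{\bullet j}$ for every $j$, so $A^{(R)}\hat X=A^{(R)}\hat U Y=A^{(R)}X$. The real content is the identity $\hat X A^{(C)}=XA^{(C)}$.

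For that, the key structural fact is that $\hat U$ — computed in Step~1 as a basis for the column space of $B^{(C,0)}=XA^{(C)}$ — actually spans $span(X)$. Indeed $span(XA^{(C)})\subseteq span(X)$ always, and by hypothesis $rank(XA^{(C)})=r=rank(X)$, so the two subspaces have equal dimension and hence coincide. Thus there is $W\in\matrixspace r{n_2}$ with $X=\hat U W$; since $\hat U$ has full column rank $r$ and $rank(X)=r$, also $rank(W)=r$. Next I would establish that the linear systems in Step~2 have a \emph{unique} solution, i.e.\ $rank(A^{(R)}\hat U)=r$: from $A^{(R)}X=A^{(R)}\hat U W$ of rank $r$, together with $A^{(R)}\hat U\in\matrixspace{k^{(R)}}r$, we get $r=rank(A^{(R)}\hat U W)\le rank(A^{(R)}\hat U)\le r$. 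Each system $A^{(R)}\hat U Y_{\bullet j}=B^{(R,0)}_{\bullet j}$ is then consistent (the vector $W_{\bullet j}$ solves it) and has that as its only solution, so $Y=W$ and $\hat X=\hat U Y=\hat U W=X$; both claimed equalities follow, indeed $\hat X=X$.

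I do not expect a genuine obstacle; the only care needed is the rank bookkeeping that converts the hypotheses $rank(A^{(R)}X)=r$ and $rank(XA^{(C)})=r$ into, respectively, full column rank of $A^{(R)}\hat U$ and the identity $span(\hat U)=span(X)$ — the algorithm itself only promises ``a basis of size $r$'' and ``a solution'' of each linear system, so these must be argued. Combined afterward with Lemma~\ref{lem:uniqness solution affine} (and the trivial remark that $\hat X=\hat U Y$ has rank at most $r$), this lemma yields the exact-recovery statement, which is why the weaker form stated here suffices.
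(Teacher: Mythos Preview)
Your proof is correct and follows essentially the same line as the paper's: both establish $span(\hat U)=span(X)$ from $rank(XA^{(C)})=r$, write $X=\hat U W$, and deduce $rank(A^{(R)}\hat U)=r$ from $rank(A^{(R)}X)=r$. The only cosmetic difference is that the paper plugs $B^{(R,0)}=A^{(R)}\hat U W$ into the closed-form expression~\eqref{eq:algorithm1} and simplifies to verify the two identities, whereas you use uniqueness of the solution of each linear system to conclude $Y=W$ and hence $\hat X=X$ directly --- a slightly cleaner variant of the same argument.
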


\subsubsection{Exact Recovery for GRC}

For the noiseless case, we can recover $X$ with the minimal number of measurements, as shown in Theorem \ref{cor:uniqness_GRC} (proof given in the Appendix, Section \ref{proof:uniqness_GRC}):
\begin{thm}
\label{cor:uniqness_GRC} Let $\hat{X}$ be the output of Algorithm \ref{alg:noiseless_case} in the GRC model with $Z^{(C)},Z^{(R)}=0$
and $k^{(R)},k^{(C)}\geq r$. Then $P(\hat{X}=X)=1$.
\end{thm}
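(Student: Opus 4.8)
The plan is to reduce the statement to the two lemmas just stated, by establishing a probability-one rank condition for Gaussian row/column projections of $X$. Write the $SVD$ $X=U\Sigma V^{T}$ with $U\in\orthomatrixspace{n_{1}}{r}$, $V\in\orthomatrixspace{r}{n_{2}}$ and $\Sigma$ diagonal and invertible. Since $\Sigma V^{T}$ has full row rank $r$, we have $\ar X=(\ar U)\Sigma V^{T}$ with $rank(\ar X)=rank(\ar U)$, and likewise $X\ac=U\Sigma(V^{T}\ac)$ with $rank(X\ac)=rank(V^{T}\ac)$. In the GRC model $\ar$ has i.i.d.\ $N(0,1)$ entries, so the rows of $\ar U\in\matrixspace{k^{(R)}}{n_{1}}\cdot U$ are i.i.d.\ and each row equals $a^{T}U$ for $a\sim N(0,I_{n_{1}})$; because $U$ has orthonormal columns, $U^{T}a\sim N(0,I_{r})$, so $\ar U$ is a $k^{(R)}\times r$ matrix with i.i.d.\ standard Gaussian entries. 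Similarly $V^{T}\ac$ is an $r\times k^{(C)}$ matrix with i.i.d.\ standard Gaussian entries.

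The next step is the measure-zero argument: a $k\times r$ (or $r\times k$) random matrix with i.i.d.\ absolutely continuous entries and $k\ge r$ has full rank $r$ almost surely, since the determinant of any fixed $r\times r$ minor is a polynomial in the entries that is not identically zero (some realization has full rank), hence its zero set has Lebesgue measure zero. Applying this with $k^{(R)}\ge r$ and $k^{(C)}\ge r$ gives $rank(\ar U)=r$ and $rank(V^{T}\ac)=r$, each with probability $1$; intersecting these two events we obtain $rank(\ar X)=rank(X\ac)=r$ with probability $1$.

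On this probability-one event, $X$ satisfies the hypotheses of Lemma \ref{lem:hatX_is_X} with the noiseless inputs $\brzero=\ar X$, $\bczero=X\ac$; note that the step-1 basis $\hat{U}$ then has exactly $r$ columns, and $span(\hat U)=span(X\ac)=span(X)=span(U)$, so $rank(\ar\hat U)=rank(\ar U)=r$ and the closed form \eqref{eq:algorithm1} is well defined. Lemma \ref{lem:hatX_is_X} yields an output $\hat{X}$ with $\ar X=\ar\hat{X}$ and $X\ac=\hat{X}\ac$; moreover $\hat{X}=\hat{U}Y$ with $\hat{U}\in\matrixspace{n_{1}}{r}$, so $rank(\hat{X})\le r$, i.e.\ $\hat{X}\in\lowrank{n_{1}}{n_{2}}{r}$. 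Finally I invoke Lemma \ref{lem:uniqness solution affine} with $X_{1}=X$ and $X_{2}=\hat{X}$: both are in $\lowrank{n_{1}}{n_{2}}{r}$, $rank(\ar X_{1})=rank(X_{1}\ac)=r$, and $\ar X_{1}=\ar X_{2}$, $X_{1}\ac=X_{2}\ac$; the lemma forces $\hat{X}=X$. Since this holds on a probability-one event, $P(\hat{X}=X)=1$.

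The only delicate point is the measure-zero/full-rank claim for Gaussian matrices; everything else is routine bookkeeping feeding into the two lemmas. I would either use the determinant-polynomial argument above, or argue by induction on the number of rows using that a standard Gaussian vector in $\mathbb{R}^{n}$ avoids any fixed proper linear subspace with probability $1$, so successively appended Gaussian rows keep increasing the rank until it reaches $r$. I do not expect any genuine obstacle here.
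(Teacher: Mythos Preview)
Your proposal is correct and follows essentially the same route as the paper: establish $rank(\ar X)=rank(X\ac)=r$ almost surely via the fact that $\ar U$ and $V^{T}\ac$ are i.i.d.\ Gaussian matrices (the paper isolates this as a separate lemma, whereas you argue it inline), then feed this into Lemma~\ref{lem:hatX_is_X} and Lemma~\ref{lem:uniqness solution affine}. Your write-up is slightly more explicit about verifying $rank(\ar\hat U)=r$ and $rank(\hat X)\le r$, but the argument is the same.
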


\subsubsection{Exact Recovery for RCMC}

In the RCMC model, rows and columns of $X$ are sampled with replacement.
Since the same row can be sampled over and over, we cannot guarantee uniqueness of solution, as was the case for the
GRC model, but rather wish to prove that exact recovery of $X$ is possible with high probability. We assume
the Bernoulli rows and columns model as described in Section \ref{sec:Preliminaries-and-Notations}
and assume for simplicity that $\kr=\kc=k$.

\begin{thm}
\label{theorem:Uniqness_MC} Let $X=U\Sigma V^{T}$ be the $SVD$ of
$X \in \matrixspace {n_1}{n_2}$, and $\max(\mu(U),\mu(V))<\mu$. Take $A^{(R)}$
and $A^{(C)}$ as in the RCMC model without noise and probabilities
$\pr=\frac{k}{n_1}$ and $\pc=\frac{k}{n_2}$. Let $\beta>1$ such that $C_{R}\sqrt{\frac{\beta log(n)r\mu}{k}}<1$
where $C_{R}$ is uniform constant and let $\hat{X}$
be the output of Algorithm \ref{alg:noiseless_case}. Then $P\left(\hat{X}=X\right)>1-6min(n_1,n_2)^{-\beta}$.
\end{thm}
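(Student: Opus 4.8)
The plan is to first reduce exact recovery to two rank conditions on submatrices of $X$ using the deterministic Lemmas \ref{lem:uniqness solution affine} and \ref{lem:hatX_is_X}, and then to verify these rank conditions with the stated probability by a matrix Chernoff bound that exploits the incoherence hypothesis. First I would observe that the output $\hat{X}=\hat{U}Y$ of Algorithm \ref{alg:noiseless_case} always has $rank(\hat{X})\le r$, so $\hat{X}\in\lowrank{n_1}{n_2}r$. Consequently, on the event that $rank(\ar X)=rank(X\ac)=r$, Lemma \ref{lem:hatX_is_X} gives $\ar X=\ar\hat{X}$ and $X\ac=\hat{X}\ac$, and then Lemma \ref{lem:uniqness solution affine} applied with $X_1=X$, $X_2=\hat{X}$ yields $\hat{X}=X$ deterministically. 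Hence the entire theorem reduces to bounding $P\big(rank(\ar X)=rank(X\ac)=r\big)$.

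Next I would translate this into a statement about $U$ and $V$. In the RCMC model $\ar X=X_{I\bullet}$ and $X\ac=X_{\bullet J}$, where $I\subseteq[n_1]$ and $J\subseteq[n_2]$ are the random sets of sampled rows and columns. Writing $X=U\Sigma V^{T}$, the matrix $\Sigma V^{T}$ has full row rank $r$ and $U\Sigma$ has full column rank $r$, so $rank(X_{I\bullet})=rank(U_{I\bullet})$ and $rank(X_{\bullet J})=rank(V_{J\bullet})$. Thus it suffices to show that $U_{I\bullet}$ and $V_{J\bullet}$ each have full column rank $r$ with probability at least $1-6\min(n_1,n_2)^{-\beta}$.

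I would then treat $U_{I\bullet}$ (the argument for $V_{J\bullet}$ being identical with $n_1,\pr$ replaced by $n_2,\pc$). Let $u_i:=U^{T}e_i\in\mathbb{R}^{r}$ be the $i$-th row of $U$ and let $r_1,\dots,r_{n_1}$ be the i.i.d.\ $\mathrm{Bernoulli}(\pr)$ sampling indicators. Then
\begin{equation}
W:=U_{I\bullet}^{T}U_{I\bullet}=\sum_{i=1}^{n_1}r_i\,u_i u_i^{T},\qquad \mathbb{E}[W]=\pr\,U^{T}U=\tfrac{k}{n_1}I_r ,
\end{equation}
a sum of independent random PSD matrices. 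The incoherence hypothesis gives $\|u_i\|^{2}=\|P_U(e_i)\|^{2}\le \mu r/n_1$, so each summand satisfies $\|r_i u_i u_i^{T}\|\le \mu r/n_1$ almost surely. Applying the matrix Chernoff inequality for the minimal eigenvalue of a sum of independent bounded PSD matrices (with $\lambda_{\min}(\mathbb{E}[W])=k/n_1$ and per-term bound $\mu r/n_1$) gives, for any $\delta\in(0,1)$,
\begin{equation}
P\Big(\lambda_{\min}(W)\le(1-\delta)\tfrac{k}{n_1}\Big)\le r\left(\frac{e^{-\delta}}{(1-\delta)^{1-\delta}}\right)^{k/(\mu r)} .
\end{equation}
Fixing $\delta=\tfrac12$ and invoking the hypothesis $C_R\sqrt{\beta\log(n)r\mu/k}<1$ — which, for a suitable universal $C_R$, is exactly what forces $k/(\mu r)$ to exceed the multiple of $\beta\log n$ needed for the right-hand side to be at most $r\cdot n^{-\beta}\le\min(n_1,n_2)\cdot n^{-\beta}$ times a small constant — I conclude that $\lambda_{\min}(W)>0$, hence $U_{I\bullet}$ has full column rank $r$, off an event of the stated size. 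A union bound over the two submatrices $U_{I\bullet}$ and $V_{J\bullet}$, together with bookkeeping of the constant in the Chernoff tail, yields the probability $1-6\min(n_1,n_2)^{-\beta}$.

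The main obstacle is this last quantitative step: selecting the precise form of the matrix concentration inequality and tuning $\delta$ and the universal constant $C_R$ so that the incoherence-driven per-term bound $\mu r/n_1$ and the hypothesis on $k$ produce exactly the failure probability $6\min(n_1,n_2)^{-\beta}$; everything else (the reduction through Lemmas \ref{lem:uniqness solution affine} and \ref{lem:hatX_is_X}, and the rank identities via the SVD) is routine. A small point to check carefully is that the Bernoulli model genuinely produces \emph{independent} summands, so that the matrix Chernoff bound applies directly and one need not appeal to a sampling-without-replacement variant.
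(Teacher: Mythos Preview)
Your proposal is correct and shares the paper's deterministic scaffolding exactly: reduce to the event $\{rank(\ar X)=rank(X\ac)=r\}$ via Lemmas \ref{lem:uniqness solution affine} and \ref{lem:hatX_is_X}, rewrite this as full column rank of $U_{I\bullet}$ and $V_{J\bullet}$ using the SVD, and then bound the probability that the Gram matrix $W=U_{I\bullet}^{T}U_{I\bullet}=\sum_i r_i u_iu_i^{T}$ is singular. Where you diverge from the paper is in the concentration tool. The paper does not invoke the matrix Chernoff bound; instead it proves a dedicated lemma (Lemma \ref{lemma: uniqnes_MC}) controlling $\|p^{-1}P_UP_{A^{(R)T}}P_U-P_U\|_2$ by first bounding the expectation via a result of Cand\`es--Romberg and then upgrading to a high-probability bound through Talagrand's concentration inequality for suprema of sums (Theorem \ref{theorem: talngrad_inequality-1} and Lemma \ref{lem:bound_Z-1}). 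From that two-sided spectral bound one reads off $\sigma_r(U^{T}A^{(R)T}A^{(R)}U)>0$ and hence the rank condition.

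Your matrix Chernoff route is the more streamlined, ``modern'' argument: it handles expectation and deviation in one shot and makes the role of incoherence (the per-term bound $\mu r/n_1$) completely transparent. The paper's Talagrand-based route is the historical Cand\`es--Recht argument; it is heavier machinery but yields the same $\Theta(\mu r\beta\log n)$ threshold for $k$ and the tail $3n_i^{-\beta}$ per side, with the constant $3$ coming straight from Talagrand. The one place you should be a bit more careful than ``bookkeeping'' is the dimensional prefactor $r$ in the Chernoff tail: to land on exactly $6\min(n_1,n_2)^{-\beta}$ you will need to absorb $\log r\le\log n$ into the exponent, which slightly inflates the universal constant $C_R$ compared to the paper's. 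Otherwise the argument goes through as you describe.
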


The proof of Theorem \ref{theorem:Uniqness_MC} is in the Appendix, Section \ref{proof:Uniqness_MC}.

\begin{rem}
Both row and column measurements are need in order
to guarantee unique recovery. If, for example, we observe only rows then even
with $n-1$ observed rows and rank $r=1$ we can only determine the
unobserved row up to a constant, and thus cannot recover $X$ uniquely.
\end{rem}

\subsection{General (Noisy) Case}

In the noisy case we cannot guarantee exact recovery of $X$, and our goal is to
minimize the error $||X-\hat{X}||_{F}$ for $\hat{X}$
the output of \alg\!. Here, we give bounds on the error for the GRC model. For simplicity, we show the result for $k^{(R)}=k^{(C)}=k$.

We focus on the high dimensional case $k\leq n$, where the number of
measurements is low. In this case our bound is similar to the bound
of the Gaussian Ensemble (GE). In \cite{candes2011tight} it is shown for GE that
$||X-\hat{X}||_{F}<C_{G}\sqrt{\frac{nr\tau^{2}}{\measurements}}$
holds with high probability for some constant $C_{G}$. We next give an analogous result
for our GRC model (proof in the Appendix, Section \ref{sec:proof_theorem_noisy_GRC}). %

{}

\begin{thm}
\label{thm:the_noisy_case} Let $\ar$ and $\ac$ with $k\geq max(4r,40)$
be as in the GRC model with noise matrices $\zr,\zc$. Let $\hat{X}$
be the output of SVLS. Then there exist constants $c,\crow,\ccol$ such that with probability $>1-5e^{-ck}$:

\begin{equation}
||X-\hat{X}||_{F} \leq \sqrt{\frac{r}{k}}  \Big[ \ccol ||\zc||_{2}+\crow ||\zr||_{2} \Big].
\label{eq:theorem_GRC_noisy}
\end{equation}
\end{thm}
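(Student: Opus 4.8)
The plan is to bound $||X-\hat{X}^{(R)}||_F$ and $||X-\hat{X}^{(C)}||_F$ separately. Since step 5 of \alg returns one of these two matrices and the two analyses are mirror images under $R\leftrightarrow C$, a bound of the stated form --- with $\crow,\ccol$ taken as the larger of the two resulting constants --- then holds for $\hat{X}$ no matter which is selected. Two structural facts drive the argument. First, $\hat{U}$, the $r$ largest left singular vectors of $\bc=X\ac+\zc$, is a measurable function of $(\ac,\zc)$ alone and is therefore independent of $\ar$. Second, writing the $SVD$ $X=U\Sigma V^{T}$, rotation invariance of the Gaussian implies that $W:=V^{T}\ac\in\matrixspace{r}{k}$ has i.i.d.\ $N(0,1)$ entries, and that, conditionally on $\hat{U}$, the matrix $\ar\hat{U}$ is $k\times r$ with i.i.d.\ $N(0,1)$ entries.

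The first and most delicate step is to control the column-space error $D:=(I-\hat{U}\hat{U}^{T})X$. A Davis--Kahan/Wedin bound is the obvious tool but is the wrong one here: it would cost a factor $1/\sigma_{r}(X)$ and force the noise to be small. Instead I would argue self-referentially. Since $M_{0}:=X\ac$ has rank $\le r$, Weyl's inequality gives $||(I-\hat{U}\hat{U}^{T})\bc||_{2}=\sigma_{r+1}(\bc)\le\sigma_{r+1}(M_{0})+||\zc||_{2}=||\zc||_{2}$; then the decomposition $(I-\hat{U}\hat{U}^{T})M_{0}=(I-\hat{U}\hat{U}^{T})\bc-(I-\hat{U}\hat{U}^{T})\zc$ yields $||(I-\hat{U}\hat{U}^{T})M_{0}||_{2}\le2||\zc||_{2}$ with no smallness assumption whatsoever. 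As $(I-\hat{U}\hat{U}^{T})M_{0}$ has rank $\le r$, this upgrades to $||(I-\hat{U}\hat{U}^{T})M_{0}||_{F}\le2\sqrt{r}\,||\zc||_{2}$. Finally $M_{0}=(U\Sigma)W$, so right-multiplying by the right inverse $\psuedoinv{W}$ (with $W\psuedoinv{W}=I_{r}$) gives $(I-\hat{U}\hat{U}^{T})U\Sigma=(I-\hat{U}\hat{U}^{T})M_{0}\,\psuedoinv{W}$, hence $||D||_{F}=||(I-\hat{U}\hat{U}^{T})U\Sigma||_{F}\le2\sqrt{r}\,||\zc||_{2}/\sigma_{\min}(W)$. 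A standard lower bound on the smallest singular value of the $r\times k$ Gaussian matrix $W$ gives $\sigma_{\min}(W)\ge\sqrt{k}/4$ with probability $\ge1-e^{-ck}$ once $k\ge4r$, so on that event $||D||_{F}\le8\sqrt{r/k}\,||\zc||_{2}$.

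Next I would analyze the least-squares step. Decompose $X=\hat{U}\hat{U}^{T}X+D=\hat{U}C+D$ with $C:=\hat{U}^{T}X$; the two summands lie in orthogonal column spaces and $\hat{U}$ has orthonormal columns, so $||X-\hat{X}^{(R)}||_{F}^{2}=||\hat{Y}-C||_{F}^{2}+||D||_{F}^{2}$. Substituting $\br=\ar X+\zr=(\ar\hat{U})C+\ar D+\zr$ into the closed form for $\hat{Y}$ in eq.~(\ref{eq:close_formula_alg2}) and setting $G:=\ar\hat{U}$ gives the clean identity $\hat{Y}-C=\psuedoinv{G}(\ar D+\zr)$ (valid on the event $\sigma_{\min}(G)>0$, where $rank(G)=r$). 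For the noise term, $\psuedoinv{G}\,\zr$ has rank $\le r$, so $||\psuedoinv{G}\,\zr||_{F}\le\sqrt{r}\,||\psuedoinv{G}||_{2}\,||\zr||_{2}=\sqrt{r}\,||\zr||_{2}/\sigma_{\min}(G)$ --- routing through the rank bound is precisely what yields $||\zr||_{2}$ rather than $||\zr||_{F}$. For the cross term, factor $D=D_{1}D_{2}$ with $D_{1}$ having orthonormal columns spanning $\mathrm{col}(D)$; since $\mathrm{col}(D)\perp\mathrm{col}(\hat{U})$, the block $[\hat{U}\;D_{1}]$ has orthonormal columns and is independent of $\ar$, so $G=\ar\hat{U}$ and $H:=\ar D_{1}$ are (conditionally) $k\times r$ Gaussian matrices, and $||\psuedoinv{G}\,\ar D||_{F}=||\psuedoinv{G}HD_{2}||_{F}\le(||H||_{2}/\sigma_{\min}(G))\,||D||_{F}$. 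Using $\sigma_{\min}(G)\ge\sqrt{k}/4$ together with $||H||_{2}\le\tfrac{5}{2}\sqrt{k}$ --- each holding with probability $\ge1-e^{-ck}$ for $k\ge4r$ --- this is $\le10\,||D||_{F}\le80\sqrt{r/k}\,||\zc||_{2}$. Collecting terms, $||\hat{Y}-C||_{F}\le\sqrt{r/k}\,(c_{1}||\zc||_{2}+c_{2}||\zr||_{2})$ for absolute constants $c_{1},c_{2}$, and adding $||D||_{F}$ gives the claimed bound for $\hat{X}^{(R)}$; the bound for $\hat{X}^{(C)}$ is identical with $R$ and $C$ interchanged.

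Finally I would assemble the pieces: the good events used above are a bounded number of singular-value estimates for the Gaussian matrices $W$, $G$, $H$ (and their $R\leftrightarrow C$ analogues), each failing with probability $\le e^{-ck}$, so a union bound --- with the numeric condition $k\ge\max(4r,40)$ absorbing the constants --- gives the overall failure probability $5e^{-ck}$ and the bound $||X-\hat{X}||_{F}\le\sqrt{r/k}(\ccol||\zc||_{2}+\crow||\zr||_{2})$. I expect the crux to be the subspace estimate of the second paragraph: producing a bound on $||(I-\hat{U}\hat{U}^{T})X||_{F}$ that is simultaneously linear in $||\zc||_{2}$, free of any $\sigma_{r}(X)$ dependence, valid without any smallness assumption on the noise, and carrying the correct $\sqrt{r/k}$ factor. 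The self-bounding manipulation of $(I-\hat{U}\hat{U}^{T})M_{0}$ followed by transfer through the well-conditioned Gaussian factor $W$ is what makes this work; the remainder is routine bookkeeping with pseudoinverses and Gaussian singular-value concentration, the only recurring subtlety being to push every estimate through the rank-$\le r$ observation so that spectral (not Frobenius) norms of $\zr$ and $\zc$ appear.
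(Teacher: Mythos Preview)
Your argument is correct and arrives at the same bound, but the decomposition you use is genuinely different from the paper's. The paper does not split $X-\hat{X}^{(R)}$ orthogonally into the subspace error $D=(I-\hat{U}\hat{U}^{T})X$ and the in-subspace error $\hat{U}(C-\hat{Y})$; instead it routes everything through the identity $X=X\ac\psuedoinv{(\avc)}V^{T}$ and the single perturbation quantity $\bczero-\svdtop{\bc}r$, obtaining three additive terms $\mathbf{I},\mathbf{II},\mathbf{III}$ (eqs.~(\ref{eq:A1})--(\ref{eq:A3})). The two proofs share their essential ingredients: both control $\psuedoinv{(\avc)}$ and $\psuedoinv{(\aur)}$ via the smallest singular value of an $r\times k$ Gaussian, and both use a $2||\zc||_{2}$ perturbation estimate that avoids any Davis--Kahan denominator (your Weyl step $||(I-\hat{U}\hat{U}^{T})M_{0}||_{2}\le 2||\zc||_{2}$ plays the same role as the paper's Claim~\ref{claim:BC0-BCr}). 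The main methodological difference is in the cross term: for its term $\mathbf{II}$ the paper invokes a Johnson--Lindenstrauss bound (Lemma~\ref{lem:JL_Lemma}) to control $||\ar(\bczero-\svdtop{\bc}r)||_{F}$, and this is precisely where the numerical condition $k\ge 40$ enters; you instead exploit $\mathrm{col}(D)\perp\mathrm{col}(\hat{U})$ so that $G=\ar\hat{U}$ and $H=\ar D_{1}$ are conditionally independent Gaussian blocks, reducing the cross term to a ratio $||H||_{2}/\sigma_{\min}(G)$ of Gaussian singular values. Your route is arguably cleaner --- it avoids the JL lemma entirely and makes the orthogonal geometry explicit --- at the cost of carrying the column-space error $D$ as a separate object; the paper's route keeps all three terms expressed uniformly in $\bczero-\svdtop{\bc}r$. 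Both arguments handle the choice $\hat{X}\in\{\hat{X}^{(R)},\hat{X}^{(C)}\}$ the same way (bound each and take the worst constants), and in both cases the exact count of bad events versus the stated $5e^{-ck}$ is absorbed by shrinking $c$ using $k\ge 40$.
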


Theorem \ref{thm:the_noisy_case} applies for any $\zc$ and $\zr$.
If $k\leq n$ and $\zr,\zc\iid N(0,\tau^{2})$, then from eq. (\ref{eq:matrix_singular_values})
we get $max(||\zr||_{2},||\zc||_{2}) \leq 4\tau \sqrt{n}$ with probability
$1-e^{-2n}$. We therefore get the next Corollary for i.i.d. additive Gaussian noise:

\begin{cor}
Let $\ar$, $\ac$ as in the GRC with $n\geq k\geq max(4r,40)$, model
and $\zr,\zc\iid N(0,\tau^{2})$. Then there exist constants $c,C_{GRC}$ such that:
\begin{equation}
P\Big(||X-\hat{X}||_{F}\leq C_{GRC} \sqrt{\frac{\tau^{2}nr}{k}}\Big) > 1-5e^{-ck}-e^{-2n} .
\end{equation}
\end{cor}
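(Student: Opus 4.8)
The plan is to read the Corollary off Theorem \ref{thm:the_noisy_case} by controlling the spectral norms of the two Gaussian noise matrices and then taking a union bound. First I would invoke Theorem \ref{thm:the_noisy_case}: since $n\geq k\geq\max(4r,40)$, there are constants $c,\crow,\ccol$ such that, on an event $\mathcal{E}_1$ depending only on the design matrices $\ar,\ac$ and having probability at least $1-5e^{-ck}$, the bound
\begin{equation}
||X-\hat{X}||_{F}\leq\sqrt{\tfrac{r}{k}}\,\big[\ccol||\zc||_{2}+\crow||\zr||_{2}\big]
\end{equation}
holds. The key point is that, by the remark following Theorem \ref{thm:the_noisy_case}, this inequality is valid for \emph{any} realization of $\zr,\zc$; so on $\mathcal{E}_1$ it holds for the actual Gaussian noise, and it now remains only to bound the two operator norms.

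Next I would bound $||\zr||_{2}$ and $||\zc||_{2}$. The matrix $\zr\in\matrixspace{k}{n_{2}}$ has i.i.d. $N(0,\tau^{2})$ entries, and likewise $\zc\in\matrixspace{n_{1}}{k}$; since $k\leq n$ and $n_{1},n_{2}\leq n$, the larger dimension of each is at most $n$. Applying the standard tail bound for the largest singular value of a Gaussian matrix (the estimate quoted as eq. (\ref{eq:matrix_singular_values}) in the text) to both matrices, and using $k\le n$ together with a deviation parameter of order $\sqrt{n}$ to absorb the $\sqrt{k}+\sqrt{n}+t$ shape into a single factor, gives $\max(||\zr||_{2},||\zc||_{2})\leq 4\tau\sqrt{n}$ on an event $\mathcal{E}_2$ (depending only on the noise) of probability at least $1-e^{-2n}$.

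Finally I would intersect the two events. On $\mathcal{E}_1\cap\mathcal{E}_2$, substituting the norm bound into the displayed inequality yields
\begin{equation}
||X-\hat{X}||_{F}\leq\sqrt{\tfrac{r}{k}}\,(\ccol+\crow)\,4\tau\sqrt{n}=C_{GRC}\sqrt{\tfrac{\tau^{2}nr}{k}},
\end{equation}
with $C_{GRC}:=4(\ccol+\crow)$, which does not depend on $n,k,r,\tau$. Since $\mathcal{E}_1$ and $\mathcal{E}_2$ are governed by the (independent) design and noise randomness, the union bound gives $P(\mathcal{E}_1\cap\mathcal{E}_2)\geq 1-5e^{-ck}-e^{-2n}$, which is exactly the asserted probability.

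\textbf{Main obstacle.} There is essentially no genuine obstacle, as Theorem \ref{thm:the_noisy_case} does all the heavy lifting; the argument is bookkeeping. The only point needing a little care is checking that the quoted Gaussian operator-norm estimate applies uniformly to both noise matrices in spite of their differing shapes $k\times n_{2}$ and $n_{1}\times k$ — handled by $k\le n$ and $\max(n_{1},n_{2})=n$, so $\sqrt{n}$ dominates the relevant dimension in both cases — and keeping track that $C_{GRC}$ can be chosen independent of the problem parameters.
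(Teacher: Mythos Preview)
Your proposal is correct and follows exactly the paper's route: invoke Theorem \ref{thm:the_noisy_case}, use the Gaussian singular-value tail bound (eq.~(\ref{eq:matrix_singular_values})) to get $\max(||\zr||_{2},||\zc||_{2})\leq 4\tau\sqrt{n}$ with probability at least $1-e^{-2n}$, and combine by a union bound with $C_{GRC}=4(\ccol+\crow)$. One minor inaccuracy: the event $\mathcal{E}_1$ from Theorem \ref{thm:the_noisy_case} is not purely a function of the design matrices (it involves $\hat{U}$, which is computed from $\bc=X\ac+\zc$), so your independence claim is not quite right---but this is harmless, since the union bound you actually use does not require independence.
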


\section{Simulations Results}

We studied the performance of our algorithm using simulations.
We measured the reconstruction accuracy using the Relative Root-Mean-Squared-Error
($\RRMSE$), defined as
\begin{equation}
\RRMSE \equiv \RRMSE(X,\hat{X}) \equiv ||X - \hat{X}||_{F}/||X||_{F}.
\end{equation}
 For simplicity, we concentrated on square matrices with $n_{1}=n_{2}=n$
and used an equal number of row and column measurements, $k^{(R)}=k^{(C)}=k$
. In all simulations we sampled a random rank-$r$ matrix $X=UV^{T}$
with $U,V\in\matrixspace nr$ , $U,V\iid N(0,\sigma^{2})$.

In all simulations we assumed that $rank(X)$ is unknown and estimated
using the elbow method in eq. (\ref{eq:rank_estimation}).

\subsection{Row-Column Matrix Completion (RCMC)}
\label{sec:simulations_RCMC}

In the noiseless case we compared our design to standard MC.
We compared the reconstruction rate (probability of exact
recovery of $X$ as function of the number of measurements $\measurements$)
for the RCMC design with \alg to the reconstruction rate
for the standard MC design with the \algo \cite{keshavan2010matrix}
and SVT\cite{cai2010singular} algorithms. To allow for numerical
errors, for each simulation yielding $X$ and $\hat{X}$ we defined recovery
as successful if their $\RRMSE$ was lower than $10^{-3}$, and for
each value of $\measurements$ recorded the percentage of simulations
for which recovery was successful. In Figure \ref{figure:phase_transition}
we show results for $n=150,r=3$ and $\sigma=1$. \alg recovers
$X$ with probability $1$ with the optimal number of measurements $\measurements=r(2n-r)=894$
 yielding $\frac{d}{n^{2}}\approx0.04$ while
MC with \algo and SVT need roughly $3$-fold and $8$-fold more
measurements, respectively, to guarantee exact recovery.
\begin{figure}[H]
\begin{centering}
\includegraphics[width=11.5cm,height=8cm]{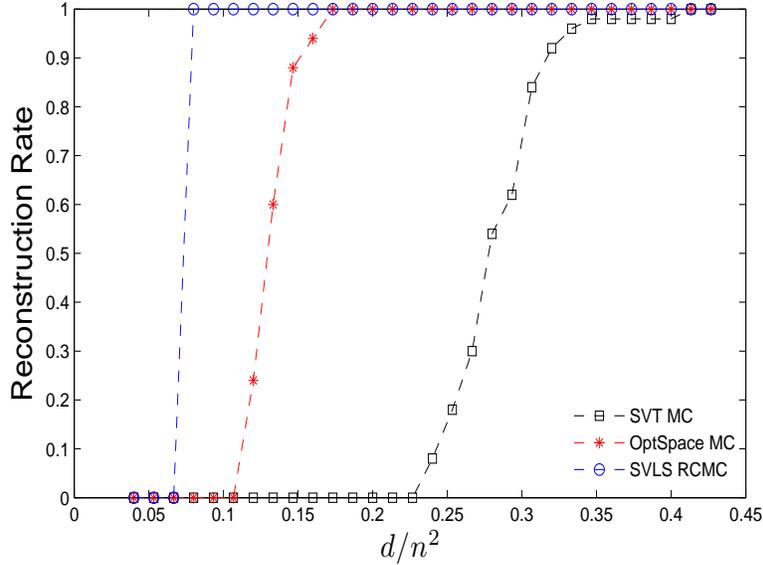}
\par\end{centering}

\protect\caption{Reconstruction rates for matrices with dimension $n=150$ and $r=3$
where $d$ is the number of known entries varied between $0$ to $8000$.
SVT and \algo are applied to the standard MC design and Algorithm
\ref{alg:noiseless_case} to RCMC. For each $\protect\measurements$
we sampled $50$ matrices and calculated the reconstruction rate as
described in the main text.}
\label{figure:phase_transition}
\end{figure}

The improvement in accuracy is not due to our design or our algorithm
alone, but due to their combination. We compared our method to \algo
and SVT for RCMC. We sampled a matrix $X$ with $n=100,r=3,\sigma=1$
and noise level $\tau^{2}=0.25^{2}$, and varied the number
of row and column measurements $k$. Figure \ref{fig:algorithm_performance}
shows that while the performance of \alg is very stable even for small $k$, the performance of \algo
varies, with multiple instances achieving poor accuracy, and SVT which minimizes the nuclear norm achieves
poor accuracy for all problem instances.
\begin{rem} The \algo algorithm has a trimming step
which delete dense columns. We omitted this step in the RCMC model
since it would delete all the known columns and rows and it's not
stable for this type of measurements, but it still get better result
than SVT.
\end{rem}

\begin{figure}[H]
\begin{centering}
\includegraphics[width=11cm,height=8cm]{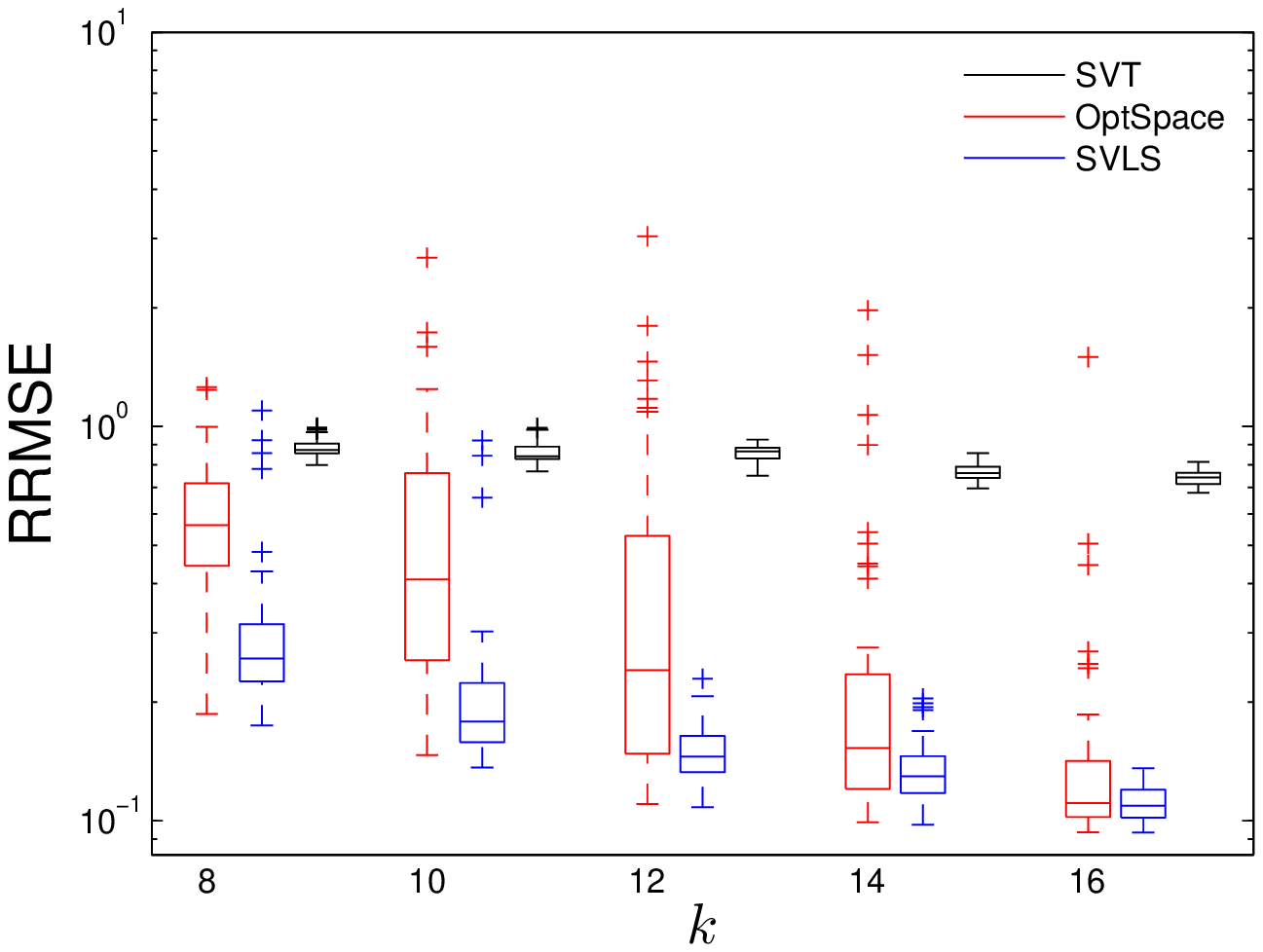}
\par\end{centering}

\protect\caption{Box-plots represent the distribution of $\protect\RRMSE$ as a function
of the number of column and row measurements $k$ over $50$ different
sampled matrices $X=UV^{T}$ with $U,V\protect\iid N(0,1)$ and $\protect\zr,\protect\zc\protect\iid N(0,0.25^{2})$.
\algo (red) fails to recover $X$ on many instances while \alg (blue) performs very well on all of them.
SVT (black) fails to recover $X$ for all instances. The trimming of dense rows and columns in
\algo was skipped, since such trimming in our settings may delete
all measurement information for low $k$. }
\label{fig:algorithm_performance}
\end{figure}

Next, we compared our RCMC to standard MC. We sampled $X$ as before
with $U,V\in\mathbb{R}_{1000\times r}$ with standard Gaussian distribution,
different rank and different noise ratio. The observations were corrupted
by additive Gaussian noise $Z$ with relative noise level $\snr \equiv ||Z||_{F}/||X||_{F}$.

Results, displayed in Table \ref{tab:accuracy-and-time}, show that \alg is significantly
faster than the other two algorithms. It is also more accurate than MC for small number
of measurements, and comparable to MC for large number of measurements.

\begin{table}
\centering
\small
\begin{tabular}{|c|c|c|c|c|c|}
  \hline
  \snr & $d$ & $r$ & \alg & \algo & SVT \\ \hline
  \!$10^{-2}\!\!\!$ & $120156$\! & \!$10$\! & \!$0.0063$ $(0.15)$\! & \!$0.004$ $(20.8)$\! & \!$0.0073$ $(18.7)$\! \\
  \!$10^{-1}\!\!\!$ & $120156$\! & \!$10$\! & \!$0.064$ $(0.15)$\! & \!$0.044$ $(21.7)$\! & \!$0.05$ $(11)$\! \\
  \!$1\!\!\!$ & $120156$\! & \!$10$\! & \!$0.612$ $(0.16)$\! & \!$0.49$ $(24.5)$\! & \!$0.51$ $(1)$\! \\
  \!$10^{-2}\!\!\!$ & $59100$\! & \!$20$\! & \!$0.029$ $(0.12)$\! & \!$0.97$ $(25.6)$\! & \!$0.76$ $(4.4)$\! \\
  \!$10^{-1}\!\!\!$ & $59100$\! & \!$20$\! & \!$0.3$ $(0.12)$\! & \!$0.98$ $(40.1)$\! & \!$0.86$ $(6.5)$\! \\
  \!$10^{-1}\!\!\!$ & $391600$\! & \!$50$\! & \!$0.081$ $(0.7)$\! & \!$0.05$ $(1200)$\! & \!$0.069$ $(13)$\! \\
  \!$1\!\!\!$ & $391600$\! & \!$50$\! & \!$0.72$ $(0.6)$\! & \!$0.61$ $(1300)$\! & \!$0.59$ $(5)$\! \\
  \hline
\end{tabular}
\caption{$\protect\RRMSE$ and time in seconds (in parenthesis) for \alg applied to RCMC, and
\algo and SVT applied to the standard MC. Results represent average of
$5$ different random matrices. \alg is faster than \algo and SVT by 1 to 3 orders of magnitudes, and shows comparable
or better $\protect\RRMSE$ in all cases. \label{tab:accuracy-and-time}}
\end{table}

Finally, we checked for RCMC and MC our performance only on unobserved entries,
to examine if $\RRMSE$ is optimistic due to overfitting to observed entries.
Results, shown in the Appendix, Section \ref{sec:test error}, indicate than no overfitting is observed.

\subsection{Gaussian Rows and Columns (GRC)}

We tested the performance of the GRC model with $A^{(R)},\ac\iid N(0,\frac{1}{n})$
and with $X=UV^{T}$ where $U,V \iid N(0,\frac{1}{\sqrt{r}})$. We compare our
results to the Gaussian Ensemble model (GE) where for each $n$, $\affine(X)$ was normalized to allow a fair comparison. In Figure \ref{fig:Relative-error-as-function-of-noise}
we take $n=100$ and $r=2$, and change the number of measurements $d=2nk$
(where $A^{(R)}\in\mathbb{R}_{k\times n}$ and $A^{(C)}\in\mathbb{R}_{n\times k}$).
We added Gaussian noise $Z^{(R)},Z^{(C)}$ with
different noise levels $\tau$. For all noise levels, the performance of GRC was better than the performance of GE.
The $\RRMSE$ error decays at a rate of $\sqrt{k}$. For GE
we used the APGL algorithm \cite{toh2010accelerated} for nuclear norm minimization.
\begin{figure}[H]
\begin{centering}
\includegraphics[width=11.5cm,height=7.5cm]{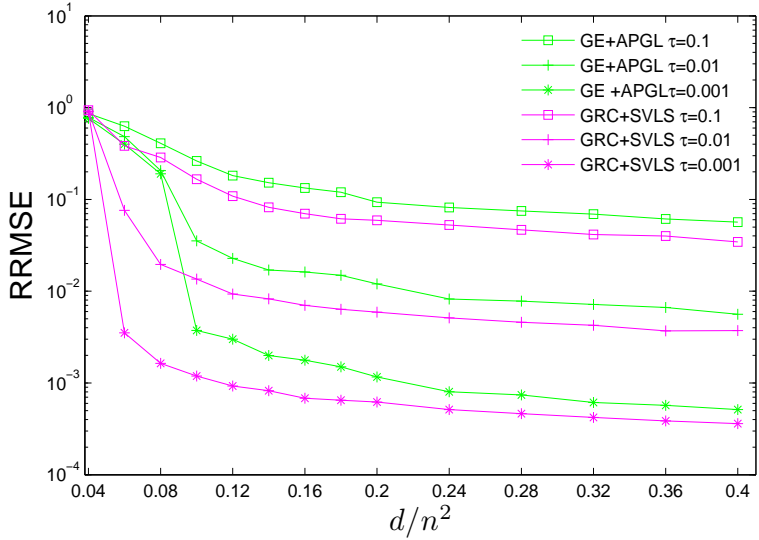}
\par\end{centering}

\protect\caption{\label{fig:Relative-error-as-function-of-noise} $\protect\RRMSE$
as function of $d$, the number of measurements, where we take $X\in\protect\lowrank{100}{100}2$,
$d$ is varied from $400$ to $4000$ and for different noise levels:
$\tau=0.1,0.01$ and $0.001$. For every point we simulated $5$ random
matrices and computed the average $\RRMSE$.}
\end{figure}

In the next tests we ran \alg for measurements with different noise levels. We take $n=1000$
and $k=100$ with different rank level every entry in $Z^{(C)},Z^{(R)}\iid N(0,\tau^{2})$
and different values of $\tau$. Results are shown in Figure \ref{fig:error_different_rates}.
The change in the relative error $\RRMSE$ is linear in $\tau$ while
the rate depends on $r$.

We next examined the behaviour of the $\RRMSE$ when $n
\to \infty$ and when $n,k,r\rightarrow\infty$ together, while the ratios $\frac{k}{n}$ and $\frac{\measurements}{r}$ are kept constant.
Results (shown in the Appendix, Section \ref{sec:appendix_simulations}) indicate that
when properly scaled, the $\RRMSE$ error is not sensitive to the value of $n$ and other parameters, in agreement
with Theorem \ref{thm:the_noisy_case}.

\section{Discussion}

We introduced a new measurements ensemble for low rank matrix recovery
where every measurements is an affine combination of a row or column
of $X$. We focused on two models: matrix completion from single columns
and rows (RCMC) and matrix recovery from Gaussian combination of columns
and rows (GRC). We proposed a fast algorithm for this ensemble.
For the RCMC model we proved that in the noiseless case our method
recovers $X$ with high probability and simulation results show that
the RCMC model outperforms the standard approach for matrix completion
in both speed and accuracy for models with small noise.

\begin{figure}[H]
\begin{centering}
\includegraphics[width=11cm,height=7.5cm]{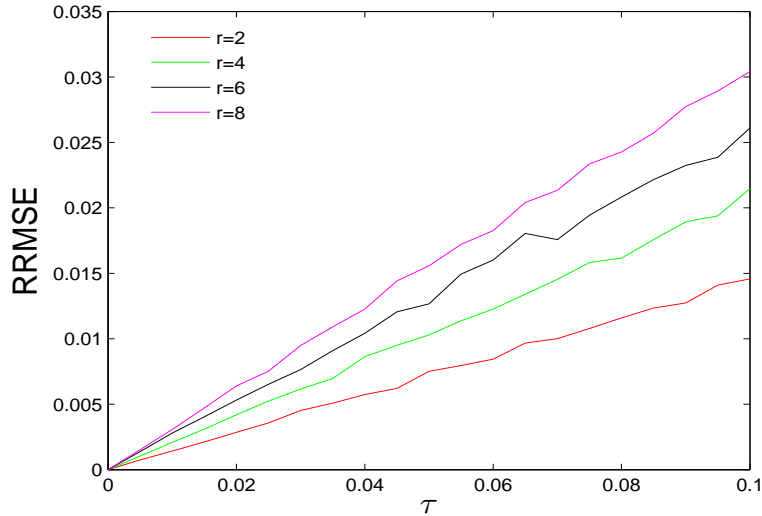}
\par\end{centering}

\protect\caption{\label{fig:error_different_rates} $\protect\RRMSE$ as a function of
noise level $\tau$ varied from $0$ to $0.1$, for matrices $X\in\protect\matrixspace{1000}{1000}$
of different ranks. For each curve we fitted a linear
regression line, with fitted slopes $0.145,0.208,0.25,0.3$ for $r=2,4,6,8$,
respectively. The slope is roughly proportional to $\sqrt{r}$ in
concordance with the error bound in Theorem \ref{thm:the_noisy_case}.
Further investigation of the relation using extensive simulations
is required in order to evaluate the dependency of the recovery error
in $r$ in a more precise manner.}
\end{figure}

For the GRC model we proved that our method recovers $X$ with the optimal
number of measurements in the noiseless case and gave an upper
bounds on the error for the noisy case. For RCMC, our simulations show that the
RCMC design may achieve comparable or favorable results, compared
to the standard MC design, especially for low noise level. Proving
recovery guarantees for this RCMC model is an interesting future challenge.

Our proposed measurement scheme is not restricted to recovery of
low-rank matrices. One can employ this measurement scheme and recover $X$ by minimizing other
matrix norms. This direction can lead to new algorithms that may improve matrix recovery for real
datasets.

\newpage

\bibliographystyle{plain} 
\bibliography{RC_arxiv_ICML2015}
\section*{\onecolumn}

\section{Appendix}

\subsection{Proofs for Noiseless GRC Case}

\subsection*{Proof of Lemma \ref{lem:uniqness solution affine}\label{proof:uniqness solution affine}}

\begin{proof}
First, $rank(X_{2}A^{(C)})=rank(X_{1}A^{(C)})=r$ and similarly 
$rank(A^{(R)}X_{2})= \allowbreak rank(A^{(R)}X_{1}) \allowbreak = \allowbreak r$.
Since $span(X_{1}A^{(C)}),span(X_{2}A^{(C)})$ are subspaces
of $span(X_{1}), \allowbreak span(X_{2})$ respectively, and $dim(span(X_{2})) \leq r$ we get
$span(X_{2})=span(X_{2}A^{(C)}) \allowbreak =span(X_{1}A^{(C)})=span(X_{1})$,
and we define $U\in\orthomatrixspace{n_{1}}r$ a basis for this subspace.
For $X_{1},X_{2}$ there are $Y_{1},Y_{2}\in\matrixspace r{n_{2}}$
such that $X_{1}=UY_{1},X_{2}=UY_{2}$. Therefore $A^{(R)}UY_{1}=A^{(R)}UY_{2}$.
Since $rank(A^{(R)}UY_{1})=r$ and $U\in\orthomatrixspace{n_{1}}r$
we get $rank(\ar U)=r$, hence the matrix $U^{T}A^{(R)^{T}}A^{(R)}U$
is invertible, which gives $Y_{1}=Y_{2}$, and therefore $X_{1}=UY_{1}=UY_{2}=X_{2}$.
\end{proof}

\subsection*{Proof of Lemma \ref{lem:hatX_is_X}\label{proof:hatX_is_X}}

\begin{proof}
$span(XA^{(C)})\subseteq span(X)$ and $rank(XA^{(C)})=rank(X)=r$,
therefore $span(X\ac)=span(X)$ and $\hat{U}$ from stage 1 in Algorithm \ref{alg:noiseless_case}
is a basis for $span(X)$. We can write $X=\hat{U} Y$ for some matrix $Y\in\mathbb{R}_{r\times n_{2}}$.
Since $rank(A^{(R)}\hat{U}Y)=rank(\hat{U})=r$, we have $rank(A^{(R)}\hat{U})=r$.
Thus eq. (\ref{eq:algorithm1}) gives $\hat{X}$ in closed form and we get:
\begin{align}
\ar\hat{X}=\ar\hat{U}[\hat{U}^{T}A^{(R)^{T}}\ar\hat{U}]^{-1}\hat{U}^{T}A^{(R)^{T}}\brzero= \nonumber\\
\ar\hat{U}[\hat{U}^{T}A^{(R)^{T}}\ar\hat{U}]^{-1}\hat{U}^{T}A^{(R)^{T}}\ar\hat{U}Y= \nonumber\\
\ar\hat{U}Y=\ar X . \label{eq:Row_equllity}
\end{align}

\vspace{-1.0cm}

\begin{align}
\hat{X}\ac=\hat{U}[\hat{U}^{T}A^{(R)^{T}}\ar\hat{U}]^{-1}\hat{U}^{T}A^{(R)^{T}}\ar X\ac= \nonumber\\
\hat{U}[\hat{U}^{T}A^{(R)^{T}}\ar\hat{U}]^{-1}\hat{U}^{T}A^{(R)^{T}}\ar\hat{U}Y\ac= \nonumber\\
\hat{U}Y\ac=X\ac . \label{eq:Col_equllity}
\end{align}

\vspace{-0.5cm}

\end{proof}


\begin{lem}
\label{lem:unitary_normal} Let $V\in\orthomatrixspace nr$ and $A^{(C)}\in\matrixspace nk$
be a random matrix $\ac\iid N(0,\sigma^{2})$. Then $V^{T}A^{(C)} \iid N(0,\sigma^{2})$.

\begin{proof}
For any two matrices $A\in\matrixspace{n_{1}}{n_{2}}$ and $B\in\matrixspace{m_{1}}{m_{2}}$
we define their Kronecker product as a matrix in $\matrixspace{n_{1}m_{1}}{n_{2}m_{2}}$:

\begin{equation}
A\otimes B=\left(
  \begin{array}{ccccc}
    a_{11}B & a_{12}B & . & . & a_{1n_2}B \\
    . & . & . & . & . \\
    . & . & . & . & . \\
    . & . & . & . & . \\
    a_{n_1 1}B & a_{n_1 2}B & . & . & a_{n_1 n_2}B \\
  \end{array} \label{eq:Kronecker_product}
\right)
\end{equation}

Now, we have $vec(V^{T}\ac)=(\identity n\otimes V^{T})vec(\ac)$ and since $vec(\ac)\sim N(0,\sigma \identity n)$
the vector $(\identity n\otimes V^{T})vec(\ac)$ is also a multivariate Gaussian vector with zero mean and covariance matrix:

\begin{align}
COV \Big( V^{T}A^{(C)} \Big) = COV \Big( (\identity n\otimes V^{T})vec(\ac) \Big) = \nonumber \\
(\identity n\otimes V^{T})COV\left(vec(A^{(C)})\right)(\identity n\otimes V^{T})^{T}= \nonumber \\
\sigma^{2}(\identity n\otimes V^{T})(\identity n\otimes V^{T})^{T}=\sigma^{2}\identity r\otimes\identity n = \sigma^2 \identity {nr}. \label{eq:VA_Kronecker}
\end{align}

\end{proof}
\end{lem}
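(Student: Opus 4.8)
The plan is to reduce the statement to the elementary fact that a linear image of a multivariate Gaussian vector is again Gaussian, with the covariance transformed in the standard way, and then to read off the i.i.d. conclusion directly from the shape of the resulting covariance matrix.

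First I would argue column by column. Write $\ac=(\matrixcolumn{\ac}{1},\dots,\matrixcolumn{\ac}{k})$; by hypothesis these $k$ columns are independent, each distributed as $N(0,\sigma^{2}\identity n)$ in $\mathbb{R}^{n}$. The $j$-th column of $V^{T}\ac$ is $V^{T}\matrixcolumn{\ac}{j}$, a fixed linear map applied to a Gaussian vector, hence Gaussian with mean $0$ and covariance $\sigma^{2}V^{T}V=\sigma^{2}\identity r$, using $V^{T}V=\identity r$ since $V\in\orthomatrixspace nr$. Independence of the $\matrixcolumn{\ac}{j}$ passes to independence of the $V^{T}\matrixcolumn{\ac}{j}$, since each of the latter is a function of a distinct column. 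Thus the columns of $V^{T}\ac$ are i.i.d. $N(0,\sigma^{2}\identity r)$, which is exactly the assertion that all $rk$ entries of $V^{T}\ac$ are i.i.d. $N(0,\sigma^{2})$.

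An equivalent route I would keep in mind, which packages independence into a single covariance computation, is to vectorize: from $vec(V^{T}\ac)=(\identity k\otimes V^{T})vec(\ac)$ and $vec(\ac)\sim N(0,\sigma^{2}\identity{nk})$ one gets that $vec(V^{T}\ac)$ is Gaussian with covariance $(\identity k\otimes V^{T})(\sigma^{2}\identity{nk})(\identity k\otimes V^{T})^{T}=\sigma^{2}(\identity k\otimes V^{T}V)=\sigma^{2}\identity{rk}$, by the mixed-product property of the Kronecker product together with $V^{T}V=\identity r$. A jointly Gaussian vector whose covariance is a scalar multiple of the identity has i.i.d. $N(0,\sigma^{2})$ coordinates, which is the claim.

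There is no genuine obstacle here; the two points deserving a line of care are (i) that the full collection of entries is jointly Gaussian — this is what licenses passing from "uncorrelated" to "independent," and it is automatic because $V^{T}\ac$ (equivalently $vec(V^{T}\ac)$) is a single linear image of the Gaussian vector $vec(\ac)$ — and (ii) placing the Kronecker factor on the correct side, which follows from the paper's convention that $vec$ stacks columns, so that $vec(V^{T}\ac\,\identity k)=(\identity k^{T}\otimes V^{T})vec(\ac)=(\identity k\otimes V^{T})vec(\ac)$.
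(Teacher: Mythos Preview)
Your proposal is correct, and your vectorization route is exactly the paper's argument: write $vec(V^{T}\ac)$ as a Kronecker-product linear image of $vec(\ac)$ and read off that the covariance is $\sigma^{2}$ times the identity. Your column-by-column argument is a valid and slightly more elementary alternative, and you are right to put $\identity k$ in the Kronecker factor---the paper's $\identity n$ there is a minor slip.
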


\subsection*{Proof of Theorem \ref{cor:uniqness_GRC}\label{proof:uniqness_GRC}}

For the GRC model, Lemmas \ref{lem:uniqness solution affine},\ref{lem:hatX_is_X} and \ref{lem:unitary_normal} can be used to prove exact recovery
of $X$ with the minimal possible number of measurements:

\begin{proof}
Let $U\Sigma V^{T}$ be the $SVD$ of $X$. From Lemma \ref{lem:unitary_normal} the elements of the matrix $V^T \ac$ have
a continuous Gaussian distribution and since the measure of low rank matrices is zero and $k^{(C)}\geq r$
we get that $P(rank(V^{T} \ac)=r)=1$. Since $\bc=U\Sigma V^{T} \ac$
we get $P(rank(\bc)=rank(U\Sigma V^{T} \ac)=r)=1$. In the same way $P(rank(\br)=r)=1$.
Combining Lemma \ref{lem:hatX_is_X} with Lemma \ref{lem:uniqness solution affine}
give us the required result.
\end{proof}

\subsection{Gradient Descent}\label{sec:gradient_descent_appendix}

The gradient descent stage is performed directly in the space of rank $r$ matrices, using the decomposition
$\hat{X}$=$WS$ where $W\in\matrixspace{n_{1}}r$ and $S\in\matrixspace r{n_{2}}$
and computing the gradient of the loss as a function of $W$ and $S$,

\begin{equation}
\llossll(W,S) = \loss(WS) = ||\ar WS-B^{(R)}||_{F}^{2} + ||WSA^{(C)}-B^{(C)}||_{F}^{2}. \label{eq:loss_function_W_S}
\end{equation}

We want to minimize eq. (\ref{eq:loss_function_W_S}) but the loss $\llossll$ isn't convex and
therefore gradient descent may fail to converge to a global optimum. We propose $\hat{X}$ (the output of \alg\!) as a starting
point which may be close enough to enable gradient descent to converge to the global
optimum, and in addition may accelerate convergence.

The gradient of $\llossll$ is (using the chain rule)
$$
\frac{\partial \llossll}{\partial W}= 2 \Big[ A^{(R)^{T}}(\ar WS-B^{(R)})S^{T}+(WSA^{(C)}-B^{(C)})A^{(C)^{T}}S^{T} \Big]
$$
\begin{equation}
\frac{\partial \llossll}{\partial S}= 2 \Big[ W^{T}A^{(R)^{T}}(\ar WS-B^{(R)})+W^{T}(WSA^{(C)}-B^{(C)})A^{(C)^{T}} \Big] \label{eq:-1}
\end{equation}

\subsection{Proofs for Noiseless RCMC Case}

We prove that if $U\in\orthomatrixspace {n_1}r$ is orthonormal then with
high probability we have $p^{-1}||U^{T}A^{(R)^{T}}A^{(R)}U-pI_{r}||_{2}<1$.
Because $U$ is orthonormal, this is equivalent to

\be
p{-1} || UU^{T}A^{(R)^{T}}A^{(R)}UU^{T}-pUU^{T} ||_{2} < 1 \Leftrightarrow
p^{-1} || P_{U}P_{A^{(R)^{T}}}P_{U}-p P_{U}||_{2}<1 \label{eq:subspace_distance}
\ee

where $P_{U}=UU^{T},P_{A^{(R)^{T}}}=A^{(R)^{T}}A^{(R)}$ and $\pr=p$.
We generalize Theorem 4.1 from \cite{candes2009exact}.
\begin{lem}
\label{lemma: uniqnes_MC} Suppose $A^{(R)}$ as in the RCMC
model with inclusion probability $p$, and $U\in\orthomatrixspace {n_1}r$ with
$\mu(U)=\frac{n_1}{r}max_{i}||P_{U}(e_{i})||^{2}=\mu$. Then there is
a numerical constant $C_{R}$ such that for all $\beta>1$, if $C_{R}\sqrt{\frac{\beta log(n_1)r\mu}{pn_1}}<1$
then:
\be
P\left(p^{-1}||P_{U}P_{A^{(R)^{T}}}P_{U}-pP_{U}||_{2}<C_{R}\sqrt{\frac{\beta log(n_1)r\mu}{pn_1}}\right)
>1-3n_{1}^{-\beta}\label{eq:RCMC_performance_bound}
\ee

\end{lem}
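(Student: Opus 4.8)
The plan is to expand the random operator explicitly. In the RCMC model each row of $A^{(R)}$ equals $e_i$ exactly when $r_i=1$, where $r_1,\dots,r_{n_1}$ are i.i.d.\ Bernoulli$(p)$, so $P_{A^{(R)^{T}}}=A^{(R)^{T}}A^{(R)}=\sum_{i=1}^{n_1} r_i e_i e_i^{T}$, and using $\sum_i P_U e_i e_i^{T} P_U = P_U$ this gives
\[
p^{-1}\!\left(P_U P_{A^{(R)^{T}}} P_U - p P_U\right)=\sum_{i=1}^{n_1} X_i,\qquad X_i:=p^{-1}(r_i-p)\,(P_U e_i)(P_U e_i)^{T}.
\]
The $X_i$ are independent, self-adjoint, mean-zero random matrices whose ranges all lie in the $r$-dimensional subspace $U$. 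The main tool will be the matrix Bernstein (operator Chernoff) inequality applied \emph{inside} $\mathrm{range}(P_U)$: this keeps the effective ambient dimension at $O(r)$ rather than $n_1$, which is exactly why the final bound pays only a $\log n_1$ penalty.

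Next I would extract the two deterministic quantities Bernstein needs. The almost-sure bound follows from incoherence: $\|X_i\|_2\le p^{-1}\|P_U e_i\|^{2}\le \mu r/(p n_1)=:L$. For the variance, $X_i^{2}=p^{-2}(r_i-p)^{2}\|P_U e_i\|^{2}(P_U e_i)(P_U e_i)^{T}$ and $\mathbb{E}(r_i-p)^{2}=p(1-p)$, hence $\mathbb{E}X_i^{2}\preceq \tfrac{\mu r}{p n_1}(P_U e_i)(P_U e_i)^{T}$, and summing over $i$ gives $\big\|\sum_i\mathbb{E}X_i^{2}\big\|_2\le \tfrac{\mu r}{p n_1}\|P_U\|_2=\mu r/(p n_1)=:v$. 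I then apply Bernstein with $t=C_R\sqrt{\beta\mu r\log(n_1)/(p n_1)}$, observing that $t^{2}=C_R^{2}\beta\log(n_1)\,v$ and $Lt=C_R v\sqrt{\beta\mu r\log(n_1)/(p n_1)}$; the hypothesis $C_R\sqrt{\beta\mu r\log(n_1)/(p n_1)}<1$ forces $Lt<v$, so the Bernstein denominator $v+Lt/3$ is at most $\tfrac43 v$ and the exponent is at most $-\tfrac{3}{8}C_R^{2}\beta\log n_1$. This yields $P\big(\|\sum_i X_i\|_2\ge t\big)\le 2r\,n_1^{-\frac{3}{8}C_R^{2}\beta}$; choosing $C_R$ with $\tfrac38 C_R^{2}\ge 2$ and using $r\le n_1$ together with $\beta>1$ bounds the right side by $3 n_1^{-\beta}$, which is the assertion of the lemma.

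The algebra above is routine; the one point that genuinely needs care is the regime check — one must confirm that the incoherence-type smallness hypothesis $C_R\sqrt{\beta\mu r\log(n_1)/(pn_1)}<1$ is precisely what places the deviation in the variance-dominated (sub-Gaussian) branch of Bernstein's inequality, since otherwise the tail would degrade to an exponent linear rather than quadratic in $t$ and the $\log n_1$ scaling would be lost. A secondary point is the reduction to operators on $\mathrm{range}(P_U)$ so that the Bernstein prefactor is $O(r)$; equivalently one can reproduce the argument of Theorem~4.1 in \cite{candes2009exact} via the Ahlswede--Winter noncommutative Chernoff bound, the only new ingredient here being that we allow a rectangular design and an arbitrary inclusion probability $p$ in place of the square Bernoulli grid.
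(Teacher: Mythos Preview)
Your argument is correct and complete, but it takes a genuinely different route from the paper's proof. The paper follows the original Cand\`es--Recht strategy closely: it first invokes a separate expectation bound (Lemma~\ref{lem:expection_bound-1}, from \cite{candes2007sparsity}) to control $E(Z)$ by $C\sqrt{\log(n_1)r\mu/(pn_1)}$, and then applies Talagrand's concentration inequality for suprema of empirical processes (Theorem~\ref{theorem: talngrad_inequality-1}, via Lemma~\ref{lem:bound_Z-1}) to show that $Z$ concentrates around its mean. Your approach bypasses both of these lemmas by applying the matrix Bernstein inequality directly to the sum $\sum_i X_i$ after restricting to the $r$-dimensional range of $P_U$; the expectation bound becomes unnecessary because Bernstein controls the tail outright, and the Talagrand machinery is replaced by a single off-the-shelf matrix concentration result. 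Your route is shorter and more self-contained, and it makes the role of the smallness hypothesis (placing $t$ in the sub-Gaussian regime) more transparent; the paper's route has the merit of reproducing the historical argument and of not requiring the dimension-reduction step to $\mathrm{range}(P_U)$, since Talagrand's inequality is dimension-free. Either way the constant $C_R$ and the $3n_1^{-\beta}$ probability come out the same up to numerical factors.
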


The proof of Lemma \ref{lemma: uniqnes_MC} builds upon (yet generalizes)
the proof of Theorem 4.1 from \cite{candes2009exact}. We next present a few lemmas which are required for the proof of Lemma \ref{lemma: uniqnes_MC}.
We start with a lemma from \cite{candes2007sparsity}.
\begin{lem}
\label{lem:expection_bound-1} If $y_{i}$ is a family of vectors in
$\mathbb{R}^{d}$ and ${r_{i}}$ is a sequence of i.i.d. Bernoulli random variables
with $P(r_{i}=1)=p$, then
\begin{equation}
E \big(p^{-1} ||\Sigma_{i}(r_{i}-p)y_{i}\otimes y_{i}|| \big) < C\sqrt{\frac{log(d)}{p}}max_{i}||y_{i}||
\end{equation}
for some numerical constant $C$ provided that the right hand side is less
than $1$.
\end{lem}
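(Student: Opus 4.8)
The plan is to reproduce the classical symmetrization-plus-Rudelson estimate that underlies this lemma. Throughout let $\|\cdot\|$ denote the operator norm, and set $M=\sum_{i}(r_{i}-p)\,y_{i}\otimes y_{i}$ and $R=E\|M\|$. I use the normalization $\big\|\sum_{i}y_{i}\otimes y_{i}\big\|\leq 1$; this is what makes the recursion below close, and it holds in the application to Lemma~\ref{lemma: uniqnes_MC}, where $y_{i}=P_{U}(e_{i})$ and hence $\sum_{i}y_{i}\otimes y_{i}=P_{U}$.

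First I would symmetrize. Introducing an independent copy $\{r_{i}'\}$ of the Bernoulli sequence and then a Rademacher sequence $\{\epsilon_{i}\}$ independent of everything, Jensen's inequality together with the symmetry of $r_{i}-r_{i}'$ gives the standard bound $E\|M\|\leq 2\,E\big\|\sum_{i}\epsilon_{i}r_{i}\,y_{i}\otimes y_{i}\big\|$. Next, conditioning on $\{r_{i}\}$ and applying Rudelson's selection lemma (a consequence of the non-commutative Khintchine inequality) to the vectors $\sqrt{r_{i}}\,y_{i}$, one obtains $E_{\epsilon}\big\|\sum_{i}\epsilon_{i}r_{i}\,y_{i}\otimes y_{i}\big\|\leq C_{0}\sqrt{log(d)}\,\big(\max_{i}\|y_{i}\|\big)\,\big\|\sum_{i}r_{i}\,y_{i}\otimes y_{i}\big\|^{1/2}$; taking expectation over $\{r_{i}\}$ and using $E\sqrt{Z}\leq\sqrt{EZ}$ yields $R\leq 2C_{0}\sqrt{log(d)}\,\big(\max_{i}\|y_{i}\|\big)\,\big(E\big\|\sum_{i}r_{i}\,y_{i}\otimes y_{i}\big\|\big)^{1/2}$.

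To close the recursion I would use the triangle inequality, $E\big\|\sum_{i}r_{i}\,y_{i}\otimes y_{i}\big\|\leq E\|M\|+p\big\|\sum_{i}y_{i}\otimes y_{i}\big\|\leq R+p$. Writing $a=2C_{0}\sqrt{log(d)}\,\max_{i}\|y_{i}\|$, the previous bound becomes $R\leq a\sqrt{R+p}$; solving this one-variable quadratic inequality gives $R\leq a^{2}+a\sqrt{p}$, hence $p^{-1}R\leq (a/\sqrt{p})^{2}+(a/\sqrt{p})$. Under the stated hypothesis the quantity $a/\sqrt{p}=2C_{0}\sqrt{\frac{log(d)}{p}}\,\max_{i}\|y_{i}\|$ is at most $1$, so $(a/\sqrt{p})^{2}\leq a/\sqrt{p}$ and therefore $E\big(p^{-1}\|M\|\big)\leq 2a/\sqrt{p}=4C_{0}\sqrt{\frac{log(d)}{p}}\,\max_{i}\|y_{i}\|$, which is the claimed bound with $C=4C_{0}$.

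The only genuinely non-routine ingredient is Rudelson's selection lemma / the non-commutative Khintchine inequality; everything else is symmetrization bookkeeping and a scalar quadratic. The one point that needs care is the normalization of $\sum_{i}y_{i}\otimes y_{i}$ (an arbitrary finite family of vectors would not satisfy the stated bound), which is why I invoke $\big\|\sum_{i}y_{i}\otimes y_{i}\big\|\leq 1$ at the outset; since the statement is taken from \cite{candes2007sparsity}, in the paper it suffices to cite it, with the steps above recording how the estimate is obtained.
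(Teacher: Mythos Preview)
The paper does not prove this lemma; it is simply quoted from \cite{candes2007sparsity} and used as a black box. Your sketch correctly reproduces the standard argument behind that result (symmetrization followed by Rudelson's non-commutative Khintchine/selection lemma, then closing the quadratic recursion), so there is nothing to compare on the level of approach. Your remark that the bound implicitly uses $\big\|\sum_{i}y_{i}\otimes y_{i}\big\|\leq 1$ is well taken: the lemma as stated omits this normalization, but it holds in the only place the paper invokes it (with $y_{i}=P_{U}(e_{i})$, so $\sum_{i}y_{i}\otimes y_{i}=P_{U}$), and it is present in the original source.
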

We next use a result from large deviations theory \cite{talagrand1996new}:
\begin{thm}
\label{theorem: talngrad_inequality-1} Let $Y_{1}...Y_{n}$ be a
sequence of independent random variables taking values in a Banach
space and define
\begin{equation}
Z=sup_{f\in F}\sum_{i=1}^{n}f(Y_{i})
\end{equation}
where $F$ is a real countable set of functions such that if $f\in F$
then $-f\in F$.

Assume that $|f|\leq B$ and $E(f(Y_{i}))=0$ for every $f\in F$
and $i\in[n]$. Then there exists a constant $C$ such that for every
$t\geq0$
\begin{equation}
P\big(|Z-E(Z)|\geq t\big)\leq3exp\left(\frac{-t}{CB}log(1+\frac{t}{\sigma+Br})\right)
\end{equation}
where $\sigma=sup_{f\in F}\sum_{i=1}^{n}E(f^{2}(Y_{i}))$.
\end{thm}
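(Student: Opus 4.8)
The plan is to treat this as a Bernstein--Bennett-type concentration inequality for the supremum of a bounded, centered empirical process and to establish it via the \emph{entropy method} (tensorization of entropy together with the Herbst argument), which is the cleanest route to the characteristic $\log(1+\cdot)$ tail; Talagrand's original derivation instead goes through the convex-distance isoperimetric inequality in product spaces, and either route is acceptable. Write $Z=\sup_{f\in F}\sum_{i=1}^{n}f(Y_{i})$, and note first that the hypothesis that $-f\in F$ whenever $f\in F$ forces $Z\geq 0$ and makes the process symmetric, so it suffices to control the log-moment-generating function $\psi(\lambda)=\log E\big[e^{\lambda(Z-E Z)}\big]$ for $\lambda$ of both signs and then invoke the Chernoff bound.

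The first step reduces the tail bound to an estimate on $\psi$. Since $Y_{1},\dots,Y_{n}$ are independent, tensorization of entropy gives $\mathrm{Ent}\big(e^{\lambda Z}\big)\leq \sum_{i=1}^{n} E\big[\mathrm{Ent}_{i}\big(e^{\lambda Z}\big)\big]$, where $\mathrm{Ent}_{i}$ freezes all coordinates but the $i$-th. The second, crucial, step is the per-coordinate estimate: writing $Z^{(i)}$ for the supremum obtained after minimizing out the resampled $i$-th coordinate, boundedness $|f|\leq B$ yields the uniform increment bound $0\leq Z-Z^{(i)}\leq 2B$, while the supremum structure yields a variance-type control of $\sum_{i}(Z-Z^{(i)})^{2}$ by the weak variance $\sigma$ rather than the crude $nB^{2}$. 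Feeding a uniform bound on each increment and a variance bound on their squares into the tensorization inequality produces a differential inequality for $\psi$ of Bennett type, with effective variance scale $\sigma+Br$ and step size $B$.

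Integrating this differential inequality with the initial data $\psi(0)=\psi'(0)=0$ gives, up to the absolute constant $C$, a Bennett bound of the form
\be
\psi(\lambda)\leq \frac{\sigma+Br}{B^{2}}\big(e^{B\lambda}-B\lambda-1\big).
\ee
Applying the Chernoff bound $P(Z-E Z\geq t)\leq \exp\big(-\sup_{\lambda>0}(\lambda t-\psi(\lambda))\big)$ and optimizing over $\lambda$ recovers exactly the stated rate, because the Legendre transform of the Bennett function is precisely the Bernstein rate with the $\log\big(1+\tfrac{t}{\sigma+Br}\big)$ structure, interpolating between the sub-Gaussian regime for small $t$ and the sub-exponential regime for large $t$. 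Repeating the argument for $\lambda<0$ (legitimate because $F$ is symmetric) gives the matching lower-tail bound; taking the union of the two events and absorbing the numerical constants of the method produces the prefactor $3$.

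The main obstacle is the per-coordinate entropy estimate. One must control how much $Z$ can move when a single coordinate $Y_{i}$ is resampled and, more delicately, obtain the \emph{variance-dependent} bound on $\sum_{i}(Z-Z^{(i)})^{2}$ in terms of $\sigma$ rather than the pessimistic $nB^{2}$; this is where the supremum structure must genuinely be exploited, via a self-bounding property of $Z$ (or extraction of a near-maximizing $f$ and a symmetrization step), and where the exact constants and the precise denominator $\sigma+Br$ are pinned down. Securing this sharp variance dependence, as opposed to a merely sub-Gaussian bound, is the crux of the whole argument.
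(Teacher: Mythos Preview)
The paper does not prove this theorem at all: it is quoted verbatim as ``a result from large deviations theory \cite{talagrand1996new}'' and used as a black box in the proof of Lemma~\ref{lem:bound_Z-1}. There is therefore no proof in the paper to compare your proposal against.

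That said, your outline is a reasonable high-level sketch of the entropy-method route (Ledoux, Massart, Boucheron--Lugosi--Massart) to Talagrand's inequality, and you correctly identify the hard step: upgrading the per-coordinate increment control to a variance-dependent bound $\sum_i (Z-Z^{(i)})^2 \lesssim \sigma + B\,E Z$ rather than the trivial $nB^2$. As written, though, your proposal is a plan rather than a proof---the crux is explicitly deferred (``this is where\ldots the exact constants\ldots are pinned down''), and the self-bounding/near-maximizer argument that actually delivers the variance term is not carried out. If you intend this as a genuine proof you would need to fill in that step; if you intend only to invoke the result, a citation to \cite{talagrand1996new} (or to the entropy-method treatments in Massart or Boucheron--Lugosi--Massart) is what the paper does and is entirely appropriate here.
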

Theorem \ref{theorem: talngrad_inequality-1} is used in the proof of the next
lemma which is taken from Theorem 4.2 in \cite{candes2009exact}. We
bring here the lemma and proof in our notations for convenience.
\begin{lem}
\label{lem:bound_Z-1} Let $U\in\orthomatrixspace nr$ with incoherence constant $\mu$.
Let $r_i$ be i.i.d. Bernoulli random variables with $P(r_i=1)=p$
and let $Y_{i}=p^{-1}(r_{i}-p)P_{U}(e_{i})\otimes P_{U}(e_{i})$ for $i=1,..,n$.
Let $Y=\sum_{i=1}^{n}Y_{i}$\textup{ and $Z=||Y||_{2}$. Suppose $E(Z)\leq 1$.
Then for every $\lambda>0$ we have
\begin{equation}
P\Big(|Z-E(Z)|\geq\lambda\sqrt{\frac{\mu rlog(n)}{pn}}\Big)\leq3exp\Big(-\gamma min(\lambda^{2}log(n),\lambda\sqrt{\frac{pnlog(n)}{\mu r}})\Big)
\end{equation}
 for some positive constant $\gamma$.}
 \end{lem}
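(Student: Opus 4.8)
The plan is to reproduce, in the present notation, the argument behind Theorem~4.2 of \cite{candes2009exact}: realize $Z=\|Y\|_2$ as a supremum of sums of independent, bounded, centered terms and apply Talagrand's concentration inequality (Theorem~\ref{theorem: talngrad_inequality-1}). Since each $Y_i=p^{-1}(r_i-p)\,P_U(e_i)\otimes P_U(e_i)$ is symmetric and the $r_i$ are independent, $Y=\sum_i Y_i$ is symmetric, so $Z=\sup_{\|x\|=1}|x^{T}Yx|$. Fixing a countable dense subset $\mathcal N$ of the unit sphere of $\mathbb R^{n}$ and setting $f_x(M)=x^{T}Mx$, $F=\{\pm f_x:x\in\mathcal N\}$, continuity of $x\mapsto x^{T}Yx$ gives $Z=\sup_{f\in F}\sum_i f(Y_i)$; this is exactly the quantity controlled by Theorem~\ref{theorem: talngrad_inequality-1}, the $Y_i$ are independent, and $F$ is closed under negation, as required.

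Next I would verify the three hypotheses. Centering is immediate: $E(r_i-p)=0$ gives $E(Y_i)=0$, hence $E(f(Y_i))=0$ for every $f\in F$. For the uniform bound, using $|r_i-p|\le 1$, the identity $x^{T}P_U e_i=(P_U x)^{T}(P_U e_i)$, Cauchy--Schwarz, and the incoherence estimate $\|P_U(e_i)\|^{2}\le \mu r/n$,
\[
|f_x(Y_i)|=p^{-1}|r_i-p|\,(x^{T}P_U e_i)^{2}\le p^{-1}\|P_U(e_i)\|^{2}\le \frac{\mu r}{pn}=:B .
\]
For the weak variance, $E\big(f_x^{2}(Y_i)\big)=\tfrac{1-p}{p}(x^{T}P_U e_i)^{4}$, so
\[
\sigma:=\sup_{f\in F}\sum_i E\big(f^{2}(Y_i)\big)\le \frac{1-p}{p}\Big(\max_i\|P_U(e_i)\|^{2}\Big)\sum_i(x^{T}P_U e_i)^{2}\le \frac{\mu r}{pn},
\]
using $\sum_i(x^{T}P_U e_i)^{2}=\|P_U x\|^{2}\le 1$. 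Thus $B$ and $\sigma$ are both of order $\mu r/(pn)$.

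I would then substitute $B$, $\sigma$ and $t=\lambda\sqrt{\mu r\log n/(pn)}$ into Theorem~\ref{theorem: talngrad_inequality-1}. The standing hypothesis $E(Z)\le 1$ is used here, exactly as in \cite{candes2009exact}, to ensure that the lower-order term accompanying $\sigma$ in the denominator of Talagrand's bound stays $O(\sigma)$. The two regimes of that bound --- small versus large argument of the logarithm --- then produce, respectively, a sub-Gaussian-type rate of order $t^{2}/\sigma\asymp\lambda^{2}\log n$ and a sub-exponential-type rate of order $t/B\asymp\lambda\sqrt{pn\log n/(\mu r)}$; combining them gives an exponent of order $\min\!\big(\lambda^{2}\log n,\ \lambda\sqrt{pn\log n/(\mu r)}\big)$. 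Absorbing all absolute constants into a single $\gamma$, and keeping the factor $3$ from Talagrand's inequality, yields the claimed tail bound.

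The routine ingredients --- the separability reduction to a countable supremum and the elementary moment computations above --- are straightforward. The one step that needs genuine care is the final simplification of Talagrand's $\tfrac{t}{CB}\log\!\big(1+\tfrac{t}{\sigma+\cdots}\big)$ expression into the clean two-regime exponent, and in particular checking that $E(Z)\le 1$ is exactly what keeps the variance-type denominator comparable to $\sigma$; without it the Gaussian regime degrades. This is precisely the bookkeeping in the proof of Theorem~4.2 of \cite{candes2009exact}, so the argument can largely be quoted.
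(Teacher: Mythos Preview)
Your proposal is correct and follows essentially the same route as the paper's proof: represent $Z=\|Y\|_2$ as a countable supremum of sums, verify the centering, uniform bound $B\asymp \mu r/(pn)$, and weak variance $\sigma\asymp \mu r/(pn)$, then plug $t=\lambda\sqrt{\mu r\log n/(pn)}$ into Talagrand's inequality (Theorem~\ref{theorem: talngrad_inequality-1}) and use $\log(1+u)\ge \log 2\cdot\min(1,u)$ to split into the two regimes. The only cosmetic difference is that you exploit the symmetry of $Y$ to use quadratic forms $x^{T}Yx$ with a single test vector, whereas the paper uses bilinear forms $\langle f_1,Yf_2\rangle$ with two; both lead to the same bounds on $B$ and $\sigma$ and hence the same conclusion.
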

\begin{proof}
We know that $Z=||Y||_{2}=sup_{f_{1},f_{2}} \langle f_{1},Yf_{2} \rangle =sup_{f_{1},f_{2}}\sum_{i=1}^{n}\langle f_{1,}Y_{i}f_{2} \rangle$,
where the supremum is taken over a countable set of unit vectors $f_{1},f_{2}\in F_{V}$.
Let $F$ be the set of all functions $f$ such that $f(Y)= \langle f_{1},Yf_{2} \rangle$
for some unit vectors $f_{1},f_{2}\in F_{V}$. For every $f\in F$
and $i\in[n]$ we have $E(f(Y_{i}))=0$. From the incoherence of $U$
we conclude that
\begin{equation}
|f(Y_{i})| = p^{-1} |r_{i}-p| \times | \langle f_{1},P_{U}(e_{i}) \rangle | \times | \langle P_{U}(e_{i}),f_{2} \rangle | \leq
p^{-1} ||P_{U}(e_{i})||^{2} \leq p^{-1}\frac{r}{n}\mu.
\end{equation}
 In addition
\[
E (f^{2}(Y_{i})) =p^{-1}(1-p) \langle f_{1},P_{U}(e_{i}) \rangle^{2} \langle P_{U}(e_{i}),f_{2} \rangle^{2} \leq
\]
\begin{equation}
p^{-1}||P_{U}(e_{i})||^{2}| \langle P_{U}(e_{i}),f_{2} \rangle^2 |\leq p^{-1}\frac{r}{n}\mu| \langle P_{U}(e_{i}),f_{2} \rangle|^{2}.\label{eq:f1f2}
\end{equation}
Since $\sum_{i=1}^{n}| \langle P_{U}(e_{i}),f_{2} \rangle|^{2}=\sum_{i=1}^{n}| \langle e_{i},P_{U}(f_{2}) \rangle |^{2}=||P_{U}(f_{2})||^{2} \leq 1$,
we get $\sum_{i=1}^{n} E (f^{2}(Y_{i})) \leq p^{-1}\frac{r}{n}\mu.$

We can take $B=2p^{-1}\frac{r}{n}\mu$ and $t=\lambda\sqrt{\frac{\mu rlog(n)}{pn}}$
and from Theorem \ref{theorem: talngrad_inequality-1}:
\begin{equation}
P(|Z-E(Z)|\geq t) \leq 3exp\left(\frac{-t}{KB}log(1+\frac{t}{2})\right) \leq 3exp\left(\frac{-tlog(2)}{KB}min(1,\frac{t}{2})\right)
\end{equation}
where the last inequality is due to the fact that for every $u>0$
we have $log(1+u)\geq log(2)min(1,u)$. Taking $\gamma=-log(2)/K$
finishes our proof.
\end{proof}

We are now ready to prove Lemma \ref{lemma: uniqnes_MC}
\begin{proof}
\label{-Decompose-any}(Lemma \ref{lemma: uniqnes_MC}) Represent any vector
$w\in R^{n_1}$ in the standard basis as $w=\sum_{i=1}^{n_1} \langle w,e_{i} \rangle e_{i}$.
Therefore $P_{U}(w)=\sum_{i=1}^{n_1} \langle P_{U}(w),e_{i} \rangle e_{i}=\sum_{i=1}^{n_1} \langle w,P_{U}(e_{i}) \rangle e_{i}$.
Recall the $r_i$ Bernoulli variables which determine if $e_i$ is included as a row of $\ar$ as in Section \ref{sec:Preliminaries-and-Notations}
and define $Y_i$ and $Z$ as in Lemma \ref{lem:bound_Z-1}. We get

\begin{align}
P_{A^{(R)^{T}}}P_{U}(w)=\sum_{i=1}^{n_1}r_{i} \langle w,P_{U}(e_{i}) \rangle e_{i}\Longrightarrow \nonumber \\
P_{U}P_{A^{(R)^{T}}}P_{U}(w)=\sum_{i=1}^{n_1}r_{i} \langle w,P_{U}(e_{i}) \rangle P_{U}(e_{i})
\end{align}
In other words the matrix $P_{U}P_{A^{(R)^{T}}}P_{U}$ is given by
\begin{equation}
P_{U}P_{A^{(R)^{T}}}P_{U} = \sum_{i=1}^{n_1} r_{i}P_{U}(e_{i})\otimes P_{U}(e_{i})
\end{equation}
 $U$ is $\mu-$incoherent, thus $max_{i\in[n_1]}||P_{U}(e_{i})||\leq\sqrt{\frac{\text{r}\mu}{n_1}}$,
hence from Lemma \ref{lem:expection_bound-1} we have for $p$ large enough:
\be
E({p}^{-1}||P_{U}P_{A^{(R)^{T}}}P_{U}-p P_{U}||_{2})<C\sqrt{\frac{log(n_1)r\mu}{p n_1}}\leq 1.
\ee

For $\beta > 1$ which satisfy the lemma's requirement, take $\lambda=\sqrt{\frac{\beta}{\gamma}}$ where $\gamma$ as in Theorem
\ref{theorem: talngrad_inequality-1}. We get that if $p>\frac{\mu log(n_1)r\beta}{n_1\gamma}$
then from Lemma \ref{lem:bound_Z-1} with probability of at least
$1-3n_{1}^{-\beta}$ we have $Z\leq C\sqrt{\frac{log(n_1)r\mu}{p n_1}}+\frac{1}{\sqrt{\gamma}}\sqrt{\frac{log(n_1)r\mu\beta}{p n_1}}$.
Taking $C_{R}=C+\frac{1}{\sqrt{\gamma}}$ finishes our proof.\end{proof}

\subsection*{Proof of Theorem \ref{theorem:Uniqness_MC}\label{proof:Uniqness_MC}}
\begin{proof}
From Lemma \ref{lemma: uniqnes_MC} and using a union bound we have that with probability
 $>1-6min(n_1,n_2)^{-\beta}$, ${\pr}^{-1}||\pr \identity r-U^{T}A^{(R)^{T}}A^{(R)}U||_{2}<1$
and ${\pc}^{-1}||\pc \identity r-V^{T}A^{(C)}A^{(C)^{T}}V||_{2}<1$. Since
the singular values of $\pr \identity r-U^{T}A^{(R)^{T}}A^{(R)}U$ are
$|\pr -\sigma_{i}(U^{T}A^{(R)^{T}}A^{(R)}U)|$ for $1\leq i\leq r$,
we have
\begin{align}
\pr -\sigma_{r}(U^{T}A^{(R)^{T}}A^{(R)}U)\leq
\sigma_{1}(\pr \identity r-U^{T}A^{(R)^{T}}A^{(R)}U)<\pr \nonumber  \\
\Rightarrow0<\sigma_{r}(U^{T}A^{(R)^{T}}A^{(R)}U)\label{eq:uniqueness_singular_values}
\end{align}

and similarly for $V^{T}A^{(C)}A^{(C)^{T}}V$. Therefore 
$rank(A^{(R)}U)=rank(V^{T}A^{(C)})=r$ and $rank(\ar X)=rank(X\ac)=r$
with probability $>1-6min(n_1,n_2)^{-\beta}$. From Lemma \ref{lem:hatX_is_X}
we get $A^{(R)}X=A^{(R)}\hat{X}\,\quad XA^{(C)}=\hat{X}A^{(C)}$ and from Lemma
\ref{lem:uniqness solution affine} we get $X=\hat{X}$ with probability $>1-6min(n_1,n_2)^{-\beta}$. \end{proof}

\subsection{Proofs for Noisy GRC Case}
\label{sec:proof_theorem_noisy_GRC}

The proof of Theorem \ref{thm:the_noisy_case} is using strong concentration results on the largest and smallest singular
values of $n\times k$ matrix with i.i.d Gaussian entries:
\begin{thm}
\cite{szarek1991condition} \label{Singular_Value_ranMatrix} Let $A\in\matrixspace nk$
be a random matrix $A\iid N(0,\frac{1}{n})$. Then, its largest and
smallest singular values obey:
\[
P\Big(\sigma_{1}(A)>1+\frac{\sqrt{k}}{\sqrt{n}}+t\Big) \text{\ensuremath{\le}}e^{-nt^{2}/2}
\]
\begin{equation}
P\Big(\sigma_{k}(A)\leq1-\frac{\sqrt{k}}{\sqrt{n}}-t\Big) \text{\ensuremath{\le}}e^{-nt^{2}/2}. \label{eq:matrix_singular_values}
\end{equation}
\end{thm}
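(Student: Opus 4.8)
The plan is to obtain both tail bounds by the standard two-step recipe: control the \emph{fluctuation} of each extreme singular value around its mean via Gaussian concentration of measure, and separately control the \emph{mean} via Gordon's Gaussian comparison inequalities. It is convenient to reduce to the standard Gaussian case by writing $A=\frac{1}{\sqrt{n}}G$ with $G\in\matrixspace nk$, $G\iid N(0,1)$, so that $\sigma_{i}(A)=\sigma_{i}(G)/\sqrt{n}$ for every $i$; every statement about $A$ then follows from the corresponding statement about $G$ after dividing by $\sqrt{n}$. Throughout I assume $k\le n$, so that $\sigma_{k}(A)$ is indeed the smallest singular value.

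First I would establish the Lipschitz property that drives the concentration. For any matrices $M,M'\in\matrixspace nk$, Weyl's perturbation inequality for singular values gives $|\sigma_{1}(M)-\sigma_{1}(M')|\le\|M-M'\|_{2}\le\|M-M'\|_{F}$ and likewise $|\sigma_{k}(M)-\sigma_{k}(M')|\le\|M-M'\|_{2}\le\|M-M'\|_{F}$, so both $M\mapsto\sigma_{1}(M)$ and $M\mapsto\sigma_{k}(M)$ are $1$-Lipschitz with respect to the Frobenius norm. Viewing $A=A(g)$ as a function of the standard Gaussian vector $g=vec(G)\in\mathbb{R}^{nk}$ and recalling $\|A\|_{F}=\|g\|/\sqrt{n}$, each map $g\mapsto\sigma_{i}(A(g))$ is therefore $\frac{1}{\sqrt{n}}$-Lipschitz in $g$. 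The one-sided Gaussian concentration inequality for Lipschitz functions of a standard Gaussian vector, with exponent $t^{2}/(2L^{2})$ for Lipschitz constant $L=1/\sqrt{n}$, then yields for every $t\ge0$
\[
P\big(\sigma_{1}(A)-E[\sigma_{1}(A)]\ge t\big)\le e^{-nt^{2}/2},\qquad P\big(\sigma_{k}(A)-E[\sigma_{k}(A)]\le -t\big)\le e^{-nt^{2}/2}.
\]

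Next I would bound the two expectations. Using Gordon's minimax comparison theorem for Gaussian processes (a refinement of Slepian's inequality), one obtains $E[\sigma_{1}(G)]\le\sqrt{n}+\sqrt{k}$ and $E[\sigma_{k}(G)]\ge\sqrt{n}-\sqrt{k}$ for the standard Gaussian matrix $G$; dividing by $\sqrt{n}$ gives $E[\sigma_{1}(A)]\le 1+\sqrt{k/n}$ and $E[\sigma_{k}(A)]\ge 1-\sqrt{k/n}$. This comparison step is the real content of the theorem and the main obstacle: it requires introducing the bilinear Gaussian process $(u,v)\mapsto u^{T}Gv$ together with the auxiliary process $g^{T}u+h^{T}v$ (with $g,h$ independent standard Gaussian vectors) indexed by the product of the two unit spheres, verifying the pairwise ordering of the increment variances that Gordon's inequality demands, and then reading off the extreme singular values as the $\max\!-\!\max$ and $\max\!-\!\min$ of $u^{T}Gv$ over those spheres. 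The concentration part, by contrast, is essentially automatic once the Lipschitz bound is in hand.

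Finally I would combine the two ingredients. Since $1+\sqrt{k/n}\ge E[\sigma_{1}(A)]$, monotonicity of the event gives $P(\sigma_{1}(A)>1+\sqrt{k/n}+t)\le P(\sigma_{1}(A)-E[\sigma_{1}(A)]>t)\le e^{-nt^{2}/2}$, the first claim. Symmetrically, $1-\sqrt{k/n}\le E[\sigma_{k}(A)]$ yields $P(\sigma_{k}(A)\le 1-\sqrt{k/n}-t)\le P(\sigma_{k}(A)-E[\sigma_{k}(A)]\le -t)\le e^{-nt^{2}/2}$, the second claim, which completes the proof.
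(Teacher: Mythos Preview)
Your argument is correct and is, in fact, the standard proof of this result (essentially the Davidson--Szarek argument): Weyl's inequality gives the $1$-Lipschitz property of $\sigma_1$ and $\sigma_k$ in the Frobenius norm, Gaussian concentration for Lipschitz functions of a standard normal vector supplies the $e^{-nt^2/2}$ tails about the means, and Gordon's minimax comparison yields the sharp bounds $E[\sigma_1(G)]\le\sqrt{n}+\sqrt{k}$ and $E[\sigma_k(G)]\ge\sqrt{n}-\sqrt{k}$.

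There is nothing to compare against here, however: the paper does \emph{not} prove this theorem. It is quoted directly from \cite{szarek1991condition} as a black-box tool and then used to derive Corollary~\ref{cor:psedo_inverse} and the noisy-GRC error bound. So your proposal goes strictly beyond what the paper itself supplies; it is a valid self-contained proof of a result the authors simply cite.
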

\begin{cor}
\label{cor:psedo_inverse} Let $A\in\matrixspace nk$ be a random matrix $A\iid N(0,1)$ where $n\geq4k$, and let $\psuedoinv A$
be the Moore-Penrose pseudoinverse of $A$. Then
\begin{equation}
P\left(||\psuedoinv A||_{2}\leq\frac{6}{\sqrt{n}}\right)>1-e^{-n/18}
\end{equation}
\end{cor}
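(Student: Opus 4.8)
The plan is to relate the spectral norm of the pseudoinverse $\psuedoinv A$ to the smallest singular value of $A$, and then invoke the Gaussian singular value concentration of Theorem \ref{Singular_Value_ranMatrix}. First I would note that since $n \geq 4k > k$, the matrix $A \in \matrixspace nk$ has full column rank $k$ with probability $1$, so $\psuedoinv A = (A^T A)^{-1} A^T$ and the nonzero singular values of $\psuedoinv A$ are exactly the reciprocals of the singular values of $A$; in particular $||\psuedoinv A||_2 = 1/\sigma_k(A)$. Hence the bound $||\psuedoinv A||_2 \leq 6/\sqrt n$ is equivalent to $\sigma_k(A) \geq \sqrt n / 6$.

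The subtlety is the normalization: Theorem \ref{Singular_Value_ranMatrix} is stated for $\tilde A \iid N(0,\frac1n)$, whereas here $A \iid N(0,1)$, so $A = \sqrt n\, \tilde A$ and $\sigma_k(A) = \sqrt n\, \sigma_k(\tilde A)$. Thus it suffices to show $\sigma_k(\tilde A) \geq \frac16$ with probability at least $1 - e^{-n/18}$. From the second inequality in \eqref{eq:matrix_singular_values}, for any $t \geq 0$ we have $\sigma_k(\tilde A) > 1 - \sqrt{k/n} - t$ with probability at least $1 - e^{-nt^2/2}$. Using $n \geq 4k$, i.e. $\sqrt{k/n} \leq \frac12$, I would choose $t = \frac13$, so that $1 - \sqrt{k/n} - t \geq 1 - \frac12 - \frac13 = \frac16$, and the failure probability is at most $e^{-n t^2/2} = e^{-n/18}$.

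Putting these together: with probability at least $1 - e^{-n/18}$ we have $\sigma_k(A) = \sqrt n\,\sigma_k(\tilde A) \geq \sqrt n / 6$, hence $\|\psuedoinv A\|_2 = 1/\sigma_k(A) \leq 6/\sqrt n$, which is the claim. There is no real obstacle here; the only points requiring a little care are the bookkeeping of the variance rescaling (factor $\sqrt n$) and checking that the choice $t = \frac13$ together with $n \geq 4k$ makes the lower bound on $\sigma_k(\tilde A)$ come out to exactly $\frac16$ with the stated exponent. One could alternatively absorb slack by choosing $t$ slightly differently, but $t=\frac13$ gives precisely the constants in the statement.
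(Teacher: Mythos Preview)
Your proof is correct and follows essentially the same approach as the paper: relate $\|\psuedoinv A\|_2$ to $1/\sigma_k(A)$, rescale to match the normalization of Theorem~\ref{Singular_Value_ranMatrix}, and apply the smallest-singular-value bound with $t=\tfrac13$ together with $n\geq 4k$ to obtain $\sigma_k(A)\geq \sqrt{n}/6$ with probability at least $1-e^{-n/18}$. Your handling of the variance rescaling is in fact slightly more explicit than the paper's, but the argument and constants are identical.
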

\begin{proof}
Since $\psuedoinv A$ is the pseudoinverse of $A$, $||\psuedoinv A||_{2}$=$\frac{1}{\sigma_{k}(A)}$ and
from Theorem \ref{Singular_Value_ranMatrix} we get $\sigma_{k}(A)\geq\sqrt{n}-\sqrt{k}-t\sqrt{n}$
with probability $\geq 1-e^{nt^{2}/2}$ (notice the scaling by $\sqrt{n}$ of the entries of $A$ compared to Theorem \ref{Singular_Value_ranMatrix}).
Therefore, if we take $n \geq 4k$ and $t=\frac{1}{3}$ we get
\begin{equation}
P\left(||\psuedoinv A||_{2}\leq\frac{6}{\sqrt{n}}\right)=P\left(\sigma_{k}(A)\geq\frac{\sqrt{n}}{6}\right) \geq 1-e^{-n/18}.
\end{equation}

\end{proof}

We also use the following lemma from \cite{shalev2014understanding}:
\begin{lem}
\label{lem:JL_Lemma} Let $Q$ to be a finite set of vectors in $\mathbb{R}^{n}$,
let $\delta\in(0,1)$ and $k$ be an integer such that
\begin{equation}
\epsilon \equiv \sqrt{\frac{6log(2|Q|/\delta)}{k}} \leq 3 \label{eq:k_epsilon_bound}.
\end{equation}
Let $A\in\matrixspace kn$ be a random matrix with $A\iid N(0,\frac{1}{k})$.
Then,
\begin{equation}
P\left(max_{x\in Q}\left|\frac{||Ax||^{2}}{||x||^{2}}-1\right|\leq\epsilon\right)>1-\delta. \label{eq:JL_concentration}
\end{equation}

\end{lem}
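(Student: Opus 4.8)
The plan is to prove this as a standard Johnson--Lindenstrauss concentration statement: first reduce to unit vectors by homogeneity, then analyze a single fixed vector via a chi-squared concentration bound, and finally take a union bound over the finite set $Q$.

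First I would observe that the ratio $\|Ax\|^{2}/\|x\|^{2}$ is invariant under rescaling of $x$, so without loss of generality I may assume every $x\in Q$ is a unit vector and study $\|Ax\|^{2}$ directly. Fix such an $x$ with $\|x\|=1$, and let $a_{1},\dots,a_{k}\in\mathbb{R}^{n}$ denote the rows of $A$, so that $a_{i}\iid N(0,\tfrac{1}{k}\identity n)$. The $i$-th coordinate of $Ax$ equals $\langle a_{i},x\rangle$, a centered Gaussian of variance $\tfrac{1}{k}\|x\|^{2}=\tfrac{1}{k}$, and these coordinates are independent across $i$ because the rows are independent. Writing $\theta_{i}=\sqrt{k}\,\langle a_{i},x\rangle$, the $\theta_{i}$ are i.i.d. standard normals and
\[
\|Ax\|^{2}=\frac{1}{k}\sum_{i=1}^{k}\theta_{i}^{2},
\]
so that $k\|Ax\|^{2}$ is $\chi^{2}$-distributed with $k$ degrees of freedom and $\mathbb{E}\,\|Ax\|^{2}=1$.

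Next I would invoke a measure-concentration bound for the normalized sum of squared standard normals, which is precisely the concentration lemma underlying the JL result in \cite{shalev2014understanding}: for $\epsilon\in(0,3)$,
\[
P\!\left(\left|\frac{1}{k}\sum_{i=1}^{k}\theta_{i}^{2}-1\right|>\epsilon\right)\leq 2e^{-k\epsilon^{2}/6}.
\]
The hypothesis $\epsilon\leq 3$ in \eqref{eq:k_epsilon_bound} is exactly the range condition needed to apply this inequality, which yields $P(|\,\|Ax\|^{2}-1|>\epsilon)\leq 2e^{-k\epsilon^{2}/6}$ for each fixed unit vector $x$.

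Finally I would combine these single-vector estimates through a union bound over the $|Q|$ elements of $Q$:
\[
P\!\left(\max_{x\in Q}\left|\frac{\|Ax\|^{2}}{\|x\|^{2}}-1\right|>\epsilon\right)\leq |Q|\cdot 2e^{-k\epsilon^{2}/6}.
\]
Substituting the prescribed value $\epsilon=\sqrt{6\log(2|Q|/\delta)/k}$ gives $k\epsilon^{2}/6=\log(2|Q|/\delta)$, hence $2e^{-k\epsilon^{2}/6}=\delta/|Q|$, and the right-hand side collapses to $\delta$, which is the claimed bound. The only nontrivial ingredient is the chi-squared concentration inequality with the explicit constant $6$; since that is exactly the measure-concentration lemma cited from \cite{shalev2014understanding}, the main task reduces to instantiating it correctly and checking the $\epsilon\leq 3$ range, rather than to any delicate new estimate. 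Everything else is the routine bookkeeping of the homogeneity reduction and the union bound.
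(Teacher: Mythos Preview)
Your proposal is correct and follows exactly the approach the paper indicates: the paper does not give a detailed proof but simply states that the lemma is a direct consequence of the Johnson--Lindenstrauss lemma from \cite{dasgupta2003elementary} applied to each vector in $Q$ together with a union bound, which is precisely what you carry out. Your explicit identification of $\|Ax\|^2$ as a scaled $\chi^2_k$ variable and the invocation of the $2e^{-k\epsilon^2/6}$ tail bound (with the $\epsilon\leq 3$ range condition) fills in the details the paper leaves implicit.
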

Lemma \ref{lem:JL_Lemma} is a direct result of the Johnson-Lindenstrauss
lemma \cite{dasgupta2003elementary} applied to each vector in $Q$
and using the union bound. Representing the vectors in $Q$ as a
matrix, Lemma \ref{lem:JL_Lemma} shows that $\ar,\ac$ preserve matrix Frobenius
norm with high probability - a weaker property than the RIP
which holds for \textit{any} low-rank matrix.

To prove Theorem \ref{thm:the_noisy_case}, we first represent $||X-\hat{X}||_{F}$
as a sum three parts (Lemma \ref{lem:three_parts(X-hatX)}), then
give probabilistic upper bounds to each of the parts and finally use union bound. We define $\aur=A^{(R)}\hat{U}$
and $\avc=V^{T}A^{(C)}$. From Lemma \ref{lem:unitary_normal} $\aur,\avc\iid N(0,1)$, hence $rank(\aur)=rank(\avc)=r$ with probability $1$.
We assume w.l.o.g that $\hat{X}=\hat{X}^{(R)}$ (see \alg description).
Therefore, from eq. (\ref{eq:close_formula_alg2}) we have $\hat{X}=\hat{U}(A_{\hat{U}}^{(R)^{T}}\aur)^{-1}A_{\hat{U}}^{(R)^{T}}B^{(R)}$.

We denote by $\psuedoinv{\aur}=(A_{\hat{U}}^{(R)^{T}}\aur)^{-1}A_{\hat{U}}^{(R)^{T}}$and
$\psuedoinv{\avc}=A_{V^{T}}^{(C)^{T}}(A_{V^{T}}^{(C)}A_{V^{T}}^{(C)^{T}})^{-1}$
the Moore-Penrose pseudoinverse of $\aur$ and $\avc$, respectively.
We next prove the following lemma
\begin{lem}
\label{lem:three_parts(X-hatX)} Let $\ar$ and $\ac$ be as in the
GRC model and $\zr,\zc$ be noise matrices. Let $\hat{X}$ be the output
of \alg\!. Then:

\[
||X-\hat{X}||_{F}\leq\mathbf{I+II+III}
\]

where:

\begin{align}
\mathbf{I}\equiv||(B^{(C,0)}-\svdtop{\bc}r)\psuedoinv{\avc}||_{F} \label{eq:A1} \\
\mathbf{II\equiv}||\hat{U}\psuedoinv{\aur}A^{(R)}(B^{(C,0)}-\svdtop{\bc}r)\psuedoinv{\avc}||_{F} \label{eq:A2} \\
\mathbf{III\equiv}||\hat{U}\psuedoinv{\aur}\zr||_{F}. \label{eq:A3}
\end{align}

\end{lem}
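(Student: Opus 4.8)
The plan is to peel off the row‑noise contribution $\mathbf{III}$, reduce the remaining $n_{1}\times n_{2}$ error to an $n_{1}\times r$ quantity by right multiplication with $V$, and then identify that quantity — after a cancellation that removes the column noise — with $E-\hat U\psuedoinv{\aur}\ar E$, where $E:=(B^{(C,0)}-\svdtop{\bc}r)\psuedoinv{\avc}$, so that $\mathbf{I}=\|E\|_{F}$ and $\mathbf{II}=\|\hat U\psuedoinv{\aur}\ar E\|_{F}$. As noted above, by Lemma~\ref{lem:unitary_normal} the matrices $\aur$ and $\avc$ have i.i.d.\ Gaussian entries, hence full rank $r$ with probability one, so $\psuedoinv{\aur}\aur=\identity r$ and $\avc\psuedoinv{\avc}=\identity r$. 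Plugging $B^{(R)}=\ar X+\zr$ into the closed form $\hat X=\hat U\psuedoinv{\aur}B^{(R)}$ (eq.~(\ref{eq:close_formula_alg2})) gives $\hat X=\hat U\psuedoinv{\aur}\ar X+\hat U\psuedoinv{\aur}\zr$, so by the triangle inequality $\|X-\hat X\|_{F}\le\|X-\hat U\psuedoinv{\aur}\ar X\|_{F}+\mathbf{III}$, and it remains to bound the first summand by $\mathbf{I}+\mathbf{II}$. The workhorse identity, used several times below, is $\hat U\psuedoinv{\aur}\ar P_{\hat U}=P_{\hat U}$ with $P_{\hat U}:=\hat U\hat U^{T}$, which holds because $\ar P_{\hat U}=\aur\hat U^{T}$ and $\psuedoinv{\aur}\aur=\identity r$; equivalently, $\hat U\psuedoinv{\aur}\ar N=N$ for any matrix $N$ whose columns lie in $\mathrm{span}(\hat U)$.

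Since $X=U\Sigma V^{T}$, both $X$ and $\hat U\psuedoinv{\aur}\ar X$ have row space contained in $\mathrm{span}(V)$; hence so does their difference, and because $V$ has orthonormal columns, right multiplication by $V$ preserves the Frobenius norm of such a matrix, giving $\|X-\hat U\psuedoinv{\aur}\ar X\|_{F}=\|XV-\hat U\psuedoinv{\aur}\ar XV\|_{F}$. Splitting $XV=P_{\hat U}XV+(\identity{n_{1}}-P_{\hat U})XV$ and using the workhorse identity to discard the first summand (it is fixed by $N\mapsto\hat U\psuedoinv{\aur}\ar N$) yields
\[
XV-\hat U\psuedoinv{\aur}\ar XV=(\identity{n_{1}}-P_{\hat U})XV-\hat U\psuedoinv{\aur}\ar(\identity{n_{1}}-P_{\hat U})XV .
\]

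Next I relate $(\identity{n_{1}}-P_{\hat U})XV$ to $E$. From $B^{(C,0)}=X\ac$ and $\avc\psuedoinv{\avc}=\identity r$ we get $B^{(C,0)}\psuedoinv{\avc}=U\Sigma V^{T}\ac\psuedoinv{\avc}=U\Sigma=XV$, while $\svdtop{\bc}r=P_{\hat U}\bc$ together with $\bc=X\ac+\zc$ gives $\svdtop{\bc}r\psuedoinv{\avc}=P_{\hat U}\bigl(XV+\zc\psuedoinv{\avc}\bigr)$. Subtracting,
\[
E=(\identity{n_{1}}-P_{\hat U})XV-F ,\qquad F:=P_{\hat U}\zc\psuedoinv{\avc} .
\]
Substituting $(\identity{n_{1}}-P_{\hat U})XV=E+F$ into the previous display and noting that $F$ has columns in $\mathrm{span}(\hat U)$ — so $\hat U\psuedoinv{\aur}\ar F=F$ — the two copies of $F$ cancel, leaving $XV-\hat U\psuedoinv{\aur}\ar XV=E-\hat U\psuedoinv{\aur}\ar E$. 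Therefore $\|X-\hat U\psuedoinv{\aur}\ar X\|_{F}\le\|E\|_{F}+\|\hat U\psuedoinv{\aur}\ar E\|_{F}=\mathbf{I}+\mathbf{II}$, which together with the triangle inequality from the first paragraph proves the lemma.

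The only delicate point is this cancellation: $(\identity{n_{1}}-P_{\hat U})XV$ is \emph{not} equal to $E$ but to $E+F$ with $F=P_{\hat U}\zc\psuedoinv{\avc}$ a column‑noise contamination, and the argument closes only because $F$ lands in the column span of $\hat U$ and is therefore invisible to $N\mapsto N-\hat U\psuedoinv{\aur}\ar N$. This is precisely why the final bound registers the column noise $\zc$ only through $E$ (inside $\mathbf{I}$ and $\mathbf{II}$) and never as a separate term, whereas the row noise $\zr$ survives in $\mathbf{III}$. Everything else is routine bookkeeping with pseudoinverses on the probability‑one event that $\aur$ and $\avc$ have full rank $r$ in the GRC model.
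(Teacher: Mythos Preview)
Your proof is correct and follows essentially the same approach as the paper's: split off $\mathbf{III}$ by the triangle inequality, pass to an $n_{1}\times r$ object via $V$, and use that $\identity{n_1}-\hat U\psuedoinv{\aur}\ar$ annihilates anything with columns in $\mathrm{span}(\hat U)$. The paper's version is marginally more direct---it substitutes $X=\bczero\psuedoinv{\avc}V^{T}$ and then subtracts $\svdtop{\bc}r$ in one stroke via $(\identity{n_1}-\hat U\psuedoinv{\aur}\ar)\svdtop{\bc}r=0$, so your detour through $P_{\hat U}$ and the introduction-and-cancellation of $F=P_{\hat U}\zc\psuedoinv{\avc}$ never arises---but the underlying mechanism is identical.
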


\begin{proof}
We represent $||X-\hat{X}||_{F}$ as follows
\begin{align}
||X-\hat{X}||_{F}=\nonumber \\
||X-\hat{U}(A_{\hat{U}}^{(R)^{T}}\aur)^{-1}A_{\hat{U}}^{(R)^{T}}(\ar X+\zr)||_{F}=\nonumber \\
||X-\hat{U} \psuedoinv{\aur} \ar X\: -\hat{U} \psuedoinv{\aur}  \zr||_{F}\leq\nonumber \\
||X-\hat{U} \psuedoinv{\aur} \ar X||_{F}+\mathbf{III} \label{eq:Firest_resonctruction of X}\\
\nonumber
\end{align}

where we have used the triangle inequality. We next use the following
equality
\begin{equation}
XA^{(C)}\psuedoinv{\avc}V^{T}=U\Sigma V^{T}A^{(C)}\psuedoinv{\avc}V^{T}=U\Sigma V^{T}=X
\end{equation}
to obtain:
\begin{align}
||X-\hat{U} \psuedoinv{\aur} \ar X||_{F}=\nonumber \\
||(\identity n-\hat{U}\psuedoinv{\aur}\ar)X||_{F}=\nonumber \\
||(\identity n-\hat{U}\psuedoinv{\aur}A^{(R)})XA^{(C)}\psuedoinv{\avc}V^{T}||_{F}=\nonumber \\
||(\identity n-\hat{U}\psuedoinv{\aur}A^{(R)})\bczero\psuedoinv{\avc}||_{F} \label{eq:reconstruct_the_first_part1}
\end{align}
 where the last equality is true because $V$ is orthogonal.

Since $\hat{U}$ is a basis for $span(\svdtop{\bc}r)$ there exists
a matrix $Y$ such that $\hat{U}Y=\svdtop{\bc}r$ and we get:
\begin{equation}
(\identity n-\hat{U}\psuedoinv{\aur}\ar)\svdtop{\bc}r= \svdtop{\bc}r-\hat{U}\psuedoinv{\aur}\ar\hat{U}Y= \svdtop{\bc}r-\hat{U}Y=0 . \label{eq:L_and_U}
\end{equation}

Therefore
\begin{align}
||(\identity n-\hat{U}\psuedoinv{\aur}\ar)\bczero\psuedoinv{\avc}||_{F}=\nonumber \\
||(\identity n-\hat{U}\psuedoinv{\aur}\ar)(\bczero-\svdtop{\bc}r)\psuedoinv{\avc}||_{F}\leq\nonumber \\
||(B^{(C,0)}-\svdtop{\bc}r)\psuedoinv{\avc}||_{F} + ||\hat{U}\psuedoinv{\aur}A^{(R)}(B^{(C,0)}-\svdtop{\bc}r)\psuedoinv{\avc}||_{F}=\mathbf{I+II} \label{eq:reconstruct_the_first_part}
\end{align}
Combining eq. (\ref{eq:Firest_resonctruction of X}) and eq. (\ref{eq:reconstruct_the_first_part})
gives the required result.
\end{proof}
We next bound each of the three parts in the formula of Lemma \ref{lem:three_parts(X-hatX)}.
We use the following claim:
\begin{claim}
\label{claim:BC0-BCr}$||\bczero-\svdtop{\bc}r||_{2}\leq2||\zc||_{2}$\end{claim}
\begin{proof}
We know that $||\bc-\svdtop{\bc}r||_{2}\leq||\bc-\bczero||_{2}$ since
$rank(\svdtop{\bc}r)=rank(B^{(C,0)})=r$ with probability $1$, and by definition $\svdtop{\bc}r$ is the
closest rank-$r$ matrix to $B^{(C)}$ in Frobenius norm. Therefore from
the triangle inequality
\begin{align}
||(B^{(C,0)}-\svdtop{\bc}r)||_{2}\leq ||\bc-\svdtop{\bc}r||_{2}+||\bc-\bczero||_{2}\leq \nonumber \\
2||B^{(C,0)}-B^{(C)}||_{2}=2||\zc||_{2} . \label{eq:claim_for_three_parts_bound}
\end{align}

\end{proof}
Now we are ready to prove Theorem \ref{thm:the_noisy_case}. The proof
uses the following inequalities for matrix norms for any two matrices
$A,B$: \\
\begin{align}
||AB||_{2}\leq||A||_{2}||B||_{2} \nonumber \\
||AB||_{F}\leq||A||_{F}||B||_{2} \nonumber \\
rank(A)\leqslant r \Rightarrow ||A||_{F}\leq\sqrt{r}||A||_{2}. \label{eq:three_matrix_inequalities}
\end{align}

\begin{proof}
(Theorem \ref{thm:the_noisy_case}) We prove (probabilistic) upper bounds on the three terms appearing
in Lemma \ref{lem:three_parts(X-hatX)}.
\begin{enumerate}
\item We have
\begin{equation}
rank\left((B^{(C,0)}-\svdtop{\bc}r)\psuedoinv{\avc}\right)\leqslant rank\left(\psuedoinv{\avc}\right)\leqslant r .
\end{equation}
Therefore
\begin{equation}
\mathbf{I}=||(B^{(C,0)}-\svdtop{\bc}r)\psuedoinv{\avc}||_{F}\leq
\sqrt{r}||(B^{(C,0)}-\svdtop{\bc}r)||_{2}||\psuedoinv{\avc}||_{2} \label{eq:first_Bc_BCr}
\end{equation}
Since $\avc\iid N(0,1)$, from Corollary $\ref{cor:psedo_inverse}$ we get
$P\Big(||\psuedoinv{\avc}||_{2}\leq \frac{6}{\sqrt{k}}\Big) \geq 1-e^{-k/18}$ for $k\geq4r$,
hence with probability $\geq 1-e^{-k/18}$,
\begin{equation}
\mathbf{I}\leq6\sqrt{\frac{r}{k}}||(\bczero-\svdtop{\bc}r)||_{2}. \label{eq:BcBcrSQRT}
\end{equation}
From Claim \ref{claim:BC0-BCr} and eq. (\ref{eq:A1}) we get a bound on $\mathbf{I}$ for some absolute constants $C_{1},c_{1}$:
\begin{equation}
P\Big(\mathbf{I}\leq C_{1}\sqrt{\frac{r}{k}}||\zc||_{2} \Big) > 1-e^{-c_{1}k} . \label{eq:bound_A1}
\end{equation}

\item $\hat{U}$ is orthogonal and can be omitted from $\mathbf{II}$ without
changing the norm. Applying the second inequality in eq. (\ref{eq:three_matrix_inequalities}) twice, we get the
inequality:
\begin{align}
\mathbf{II=}||\hat{U}\psuedoinv{\aur}\ar(\bczero-\svdtop{\bc}r)\psuedoinv{\avc}||_{F}\leq \nonumber \nonumber \\
||\psuedoinv{\aur}||_{2}||\ar(\bczero-\svdtop{\bc}r)||_{F} ||\psuedoinv{\avc}||_{2}. \label{eq:temp_equation2}
\end{align}

From Corollary \ref{cor:psedo_inverse} we know that for $k>4r$ we have $||\psuedoinv{\aur}||_{2}\leq\frac{6}{\sqrt{k}}$
and $||\psuedoinv{\avc}||_{2}\leq\frac{6}{\sqrt{k}}$, each with probability $>1-e^{-k/18}$. Therefore,
\begin{equation}
P \Big( \mathbf{II}\leq\frac{36}{k}||\ar(B^{(C,0)}-\svdtop{\bc}r)||_{F} \Big) > 1-2e^{-k/18}. \label{eq:Second(Bc-Br)}
\end{equation}
$A^{(R)}$ and $\bczero-\svdtop{\bc}r$ are independent and $rank(B^{(C,0)}-\svdtop{\bc}r)\leq2r$.
Therefore we can apply Lemma \ref{lem:JL_Lemma} with $k$ such that
$\frac{k}{6}>log(2k)+\frac{k}{18}$ (this holds for $k\geq40$) to
get with probability $>1-2e^{-k/18}$:
\begin{align}
\mathbf{II}\leq\frac{36}{k}||\ar(\bczero-\svdtop{\bc}r)||_{F}\leq \nonumber \\
\frac{36\sqrt{2k}}{k}||(B^{(C,0)}-\svdtop{\bc}r)||_{F}\leq
36\sqrt{4\frac{r}{k}}||(B^{(C,0)}-\svdtop{\bc}r)||_{2}. \label{eq:secondBcBcrSQRT}
\end{align}
From eq. (\ref{eq:Second(Bc-Br)}) and (\ref{eq:secondBcBcrSQRT})
together with Claim \ref{claim:BC0-BCr} we have constants $C_{2}$
and $c_{2}$ such that,
\begin{equation}
P \Big (\mathbf{II}\leq C_{2}||\zc||_{2} \Big) > 1-3e^{-c_2k} . \label{eq:bound_A2}
\end{equation}

\item $rank(\psuedoinv{\aur})\leq r$ and from Corollary \ref{cor:psedo_inverse} we get $P\Big(||\psuedoinv{\aur}||_{2}\leq\frac{6}{\sqrt{k}}\Big)>1-e^{-k/18}$
for $k>4r$. Therefore, with probability $>1-e^{-k/18}$:
\begin{align}
\mathbf{III=}||\hat{U}\psuedoinv{\aur}\zr||_{F}=
||\psuedoinv{\aur}\zr||_{F}\leq \nonumber \\
\sqrt{r}||\psuedoinv{\aur}\zr||_{2}\leq
\sqrt{r}||\psuedoinv{\aur}||_{2}||\zr||_{2}\leq\frac{6\sqrt{r}}{\sqrt{k}}||\zr||_{2} . \label{eq:Z(R)_bound}
\end{align}

\end{enumerate}
Hence we have constants $C_{3}$ and $c_{3}$ such that,
$>1-e^{-c_{3}k}.$
\begin{equation}
P\Big(\mathbf{III}\leq C_{3}||\zr||_{2} \Big) > 1-e^{-c_{3}k}. \label{eq:bound_A3}
\end{equation}

Combining equations (\ref{eq:bound_A1},\ref{eq:bound_A2},\ref{eq:bound_A3})
with Lemma \ref{lem:three_parts(X-hatX)} and taking the union bound while setting $\ccol=C_{1}+C_{2}$,
$\crow=C_{3}$ with $c=min(c_{1},c_{2},c_{3})$ concludes our proof.
\end{proof}


\newpage

\subsection{Simulations for Large Values of $n$} \label{sec:appendix_simulations}

We varied $n$ between $10$ and $1000$, with
results averaged over $100$ different matrices of rank $3$ at each
point, and tried to recover them using $k=20$ row and column measurements.
Measurement matrices were $A^{(R)},A^{(C)}\iid \frac{1}{n}$ to allow similar norms
for each measurement vector for different values of $n$.
Recovery performance was insensitive to $n$. if we take $A^{(R)},A^{(C)}\iid N(0,1)$ instead
of $N(0,\frac{1}{n})$, the scaling of our results is in agreement with Theorem \ref{thm:the_noisy_case}.
\begin{figure}[H]
\begin{centering}
\includegraphics[width=12cm,height=8cm]{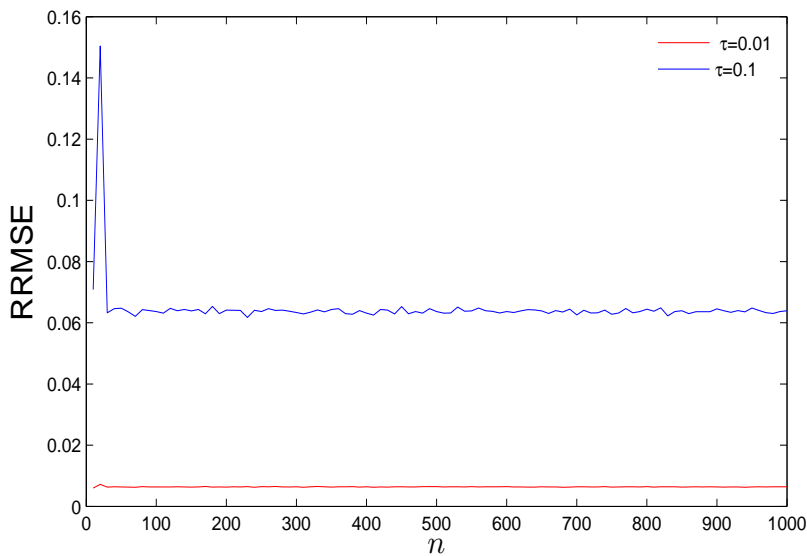}
\par\end{centering}

\protect\caption{Reconstruction error for $n\times n$ matrix where $n$ is varied
between $10$ and $1000,\thinspace k=20$ and $r=3$ and two different
noise levels: $\tau=0.1$ (blue) and $\tau=0.01$ (red). Each point
represents average performance over $100$ random matrices. \label{fig:scaling_n_k}}

\end{figure}

Next, we take $n,k,r\rightarrow\infty$ while the ratios $\frac{n}{k}=5$
and $\frac{k}{r}=4$ are kept constant, and compute the relative error
for different noise level. Again, the relative error converges rapidly
to constant, independent of $n,k,r$ .
\begin{figure}[H]
\begin{centering}
\includegraphics[width=12cm,height=8cm]{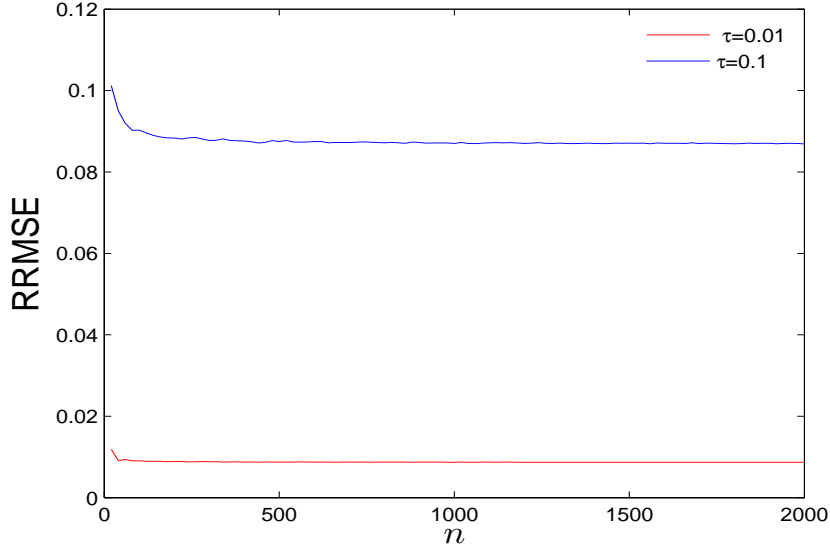}
\par\end{centering}

\protect\caption{Reconstruction error for $n\times n$ matrix $X$ with rank $r$ varying from $1$ to $50$ and with $n=20r, k=4r$.
Two different noise level are shown: $\tau=0.1$ (blue) and $\tau=0.01$ (red). Each point represents
average performance over $100$ random matrices. \label{fig:scaling_r_n_k}}

\end{figure}

\subsection{Low Rank matrix Approximation} \label{sec:Low Rank matrix}
We bring here the one pass algorithm to approximate $X$ from \cite{halko2011finding} for the convenience of the reader.
The output of this algorithm isn't low rank if $k>r$. This algorithm is different from $\algp$ and its purpose is to
approximate a (possibly full rank) matrix by low rank matrix. We adjusted Algorithm \ref{alg:-Halko} to our purpose with some changes.
First, we estimate the rank of $X$ using the elbow method from Section \ref{sec:estimate_unknown_rank} and instead of calculating the QR decomposition
of $B^{(C)}$ and $B^{(R)^{T}}$ we find their $\hat{r}$ largest singular vectors. Furthermore, we repeat part two in algorithm
\ref{alg:-Halko} while replacing the roles of columns and rows as in \alg.
This variation gives our modified algorithm $\algp$ as described in Section \ref{sec:Low_rank_aprox}.
\begin{algorithm}
Input: $A^{(R)},A^{(C)},B^{(R)},B^{(C)}$
\begin{enumerate}
\item compute $Q^{(C)}R^{(C)}$ the QR decomposition of $B^{(C)}$,
and $Q^{(R)}R^{(R)}$ the QR decomposition of $B^{(R)^{T}}$
\item Find the least-squares solution $Y=argmin_{C}||Q^{(C)}B^{(C)}-CQ^{(R)^{T}}B^{(R)^{T}}||_{F}$.
\item Return the estimate $\hat{X}=Q^{(C)}YQ^{(R)^{T}}$.

\end{enumerate}
\protect\caption{\label{alg:-Halko}}

\end{algorithm}

We compared our \alg to $\algp$ which is presented in Section \ref{sec:Low_rank_aprox}.
We took $X \in \lowrank{1000}{1000}{10}$ and $\sigma =1$. We tried to recover $X$ in the GRC model with $k=12$ for $100$ different matrices.
For each matrix, we compared the $\RRMSE$ obtained for the outputs of \alg and $\algp$.
The $\RRMSE$ for $\algp$ was lower than the $\RRMSE$ for \alg in most cases but the differences were very small and negligible.
 \begin{figure}[H]
\begin{centering}
\includegraphics[width=12cm,height=8cm]{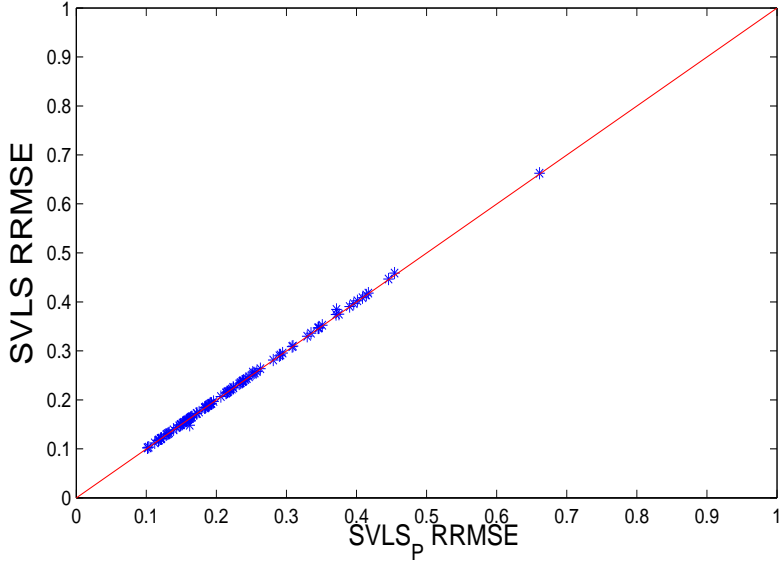}
\par\end{centering}

\protect\caption{We recover a matrix $X$ from $24000$ measurements as in the GRC model $100$ times. Figure shows average $\RRMSE$ over $100$
simulations for \alg ($Y$ axis) and $\algp$ ($X$ axis). The red linear line $Y=X$ was drawn for comparing those two algorithm,
every dot that under the red line is a simulation that SVLS was better than $\algp$ and every dot above the line tells the opposite}

\end{figure}

\subsection{Rank Estimation} \label{sec:Rank estimation}
We test the elbow method for estimating the rank of $X$ (see eq. (\ref{eq:rank_estimation})).
We take a matrix $X$ of size $400\times 400$ and different ranks. 
We add Gaussian noise with $\sigma = 0.25$ while the measurements are sampled as in the RCMC model.
For each number of measurements we sampled $100$ matrices and took the average estimated rank.
We compute the estimator $\hat{r}$ for different values of $d$, the number of measurements.
We compare our method to the rank estimation which appears in \algo \cite{keshavan2009matrix} 
for the standard MC problem.
Our simulation results, shown in Figure \ref{fig:rank_estimation}, indicate that the RCMC model 
with the elbow method is a much better design for rank estimation of $X$.

\begin{figure}[H]
\begin{centering}
\includegraphics[scale=0.7]{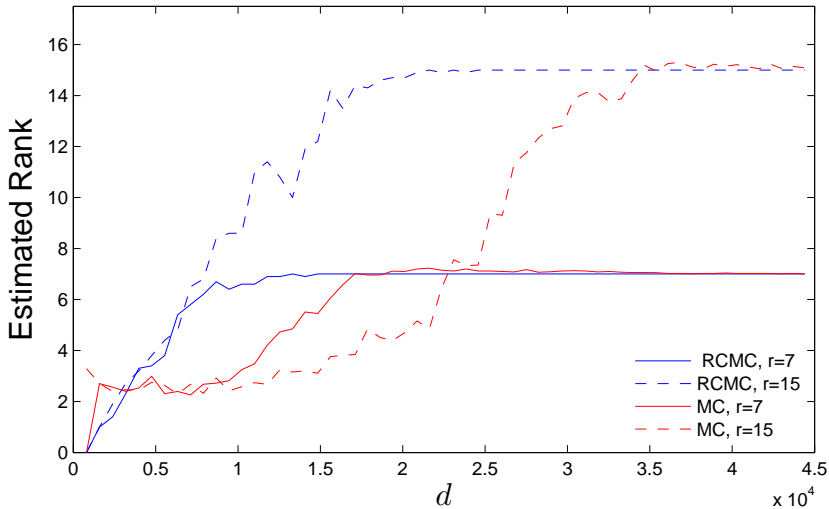}
\par\end{centering}

\protect\caption{Estimation of $rank(X)$ vs. $d$, the number of measurements, $d=k(2n-k)$ where $k$ is the number of columns in $\bc$ and number of rows in $\br$.
For each $d$ we sampled $100$ different matrices. Estimation was performed by the elbow method for RCMC model, as in eq. (\ref{eq:rank_estimation}) in the main text,
and for the MC model we used the method described in \cite{keshavan2009matrix}. RCMC recovers the correct rank with smaller number of measurements. \label{fig:rank_estimation}}
\end{figure}

\subsection{Test Error} \label{sec:test error}
In matrix completion with MC and RCMC ensembles the $\RRMSE$ loss function measures the loss 
on both the observed and unobserved entries. This loss may be too optimistic when
considering our prediction error only on unobserved entries.
Thus, instead of including all measurements in calculation of the $\RRMSE$ we compute a 
different measure of prediction error, given by the $\RRMSE$ only on the unobserved entries.
For each single-entry measurements operator $\affine$ define $E(\affine)$ the set of measured 
entries and $\bar{E}$ it's complement, i.e. the set of unmeasured entries $(i,j)\in [n_1]\times [n_2]$.
We define $X^{\bar{E}}$ to be a matrix such that $X^{\bar{E}}_{ij}=X_{ij}$ if $(i,j)\in \bar{E}$ and $0$ otherwise.
Instead of $\RRMSE(X,\hat{X})$ we now calculate $\RRMSE(X^{\bar{E}},\hat{X}^{\bar{E}})$. 
This quantity measures our reconstruction only on the unseen matrix entries $X_{ij}$,
and is thus not influenced by overfitting. In Table \ref{tab:accuracy_overffiting} we 
performed exactly the same simulation as in Table \ref{tab:accuracy-and-time} but with
$\RRMSE(X^{\bar{E}},\hat{X}^{\bar{E}})$. The results of \algo\!, SVT and SVLS stay similar to the results in Table \ref{tab:accuracy-and-time} and our $\RRMSE$ loss function does not show overfitting.

\begin{table}[H]
\centering
\small
\begin{tabular}{|c|c|c|c|c|c|}
  \hline
  \snr & $d$ & $r$ & \alg & \algo & SVT \\ \hline
  $10^{-2}\!\!\!$ & $120156$ & $10$ & $0.006\quad(0.006) $ & $0.004\quad(0.004)$ & $0.0074\quad(0.0073)$ \\
  $10^{-1}\!\!\!$ & $120156$ & $10$ & $0.065\quad(0.064)$ & $0.045\quad(0.044)$ & $0.051\quad(0.05)$ \\
  $1\!\!\!$ & $120156$ & $10$ & $0.619\quad(0.612)$ & $0.49\quad(0.49)$ & $0.52\quad(0.51)$ \\

  \hline
\end{tabular}
\caption{$\protect\RRMSE$ only on the unknown measurements. for \alg applied to RCMC, and
\algo and SVT applied to the standard MC. Results represent average of
$5$ different random matrices. The results in the parentheses are the standard $\RRMSE$ in Table \ref{tab:accuracy-and-time}. \label{tab:accuracy_overffiting}}

\end{table}

\end{document}